\PassOptionsToPackage{numbers, compress}{natbib}
\documentclass{article}
\newcommand{\treportflag}{1} 

\ifnum\treportflag=1
  \usepackage[preprint]{neurips_2026}
\else
  \usepackage{neurips_2026}
\fi

\usepackage[utf8]{inputenc} 
\usepackage[T1]{fontenc}    
\usepackage{hyperref}       
\usepackage{url}            
\usepackage{booktabs}       
\usepackage{amsfonts}       
\usepackage{nicefrac}       
\usepackage{xcolor}         
\usepackage{colortbl}
\usepackage{drago}
\usepackage{setspace}
\usepackage[normalem]{ulem}
\usepackage[on]{editing}
\usepackage{tcolorbox}

\newcommand{\ttl}{Causal Bias Detection in Generative Artificial Intelligence}

\title{\ttl{}}

\author{%
  Drago Ple{\v c}ko \\
  Department of Statistics \& Data Science \\ 
  UCLA
}

\usepackage{selectp}

\begin{document}
\ifnum\treportflag=1
\fi

\maketitle

\begin{abstract}
  Automated systems built on artificial intelligence (AI) are increasingly deployed across high-stakes domains, raising critical concerns about fairness and the perpetuation of demographic disparities that exist in the world. 
In this context, causal inference provides a principled framework for reasoning about fairness, as it links observed disparities to underlying mechanisms and aligns naturally with human intuition and legal notions of discrimination. 
Prior work on causal fairness primarily focuses on the standard machine learning setting, where a decision-maker constructs a single predictive mechanism $f_{\widehat Y}$ for an outcome variable $Y$, while inheriting the causal mechanisms of all other covariates from the real world. 
The generative AI setting, however, is markedly more complex: generative models can sample from arbitrary conditionals over any set of variables, implicitly constructing their own beliefs about all causal mechanisms rather than learning a single predictive function. This fundamental difference requires new developments in causal fairness methodology.
We formalize the problem of causal fairness in generative AI and unify it with the standard ML setting under a common theoretical framework. We then derive new causal decomposition results that enable granular quantification of fairness impacts along both (a) different causal pathways and (b) the replacement of real-world mechanisms by the generative model's mechanisms. We establish identification conditions and introduce efficient estimators for causal quantities of interest, and demonstrate the value of our methodology by analyzing race and gender bias in large language models across different datasets.
\end{abstract}

\section{Introduction}
Automated systems built on machine learning and artificial intelligence are increasingly used to support or replace human decision-making across domains such as hiring, admissions, lending, law enforcement, and health care \citep{khandani2010consumer,mahoney2007method,brennan2009evaluating}. These systems now range from structured prediction models to modern generative AI, including large language and vision models. Across this spectrum, society is increasingly concerned about how automated systems compare to existing human decision processes, and whether they may perpetuate or amplify inequities between demographic groups. Prior work documents significant biases in decision-support tools for sentencing \citep{ProPublica}, face detection \citep{pmlr-v81-buolamwini18a}, online advertising \citep{Sweeney13,Datta15}, and authentication \citep{Sanburn15}, as well as demographic drift and occupational stereotypes in generative models \citep{luccioni2023stable, naik2023social, nangia2020crows, de2019bias, hendricks2018women, wang2024vlbiasbench}. Importantly, these issues are not unique to machines: the world from which data is sampled exhibits longstanding disparities, from gender pay gaps \citep{blau1992gender,blau2017gender} to racial bias in criminal sentencing \citep{sweeney1992influence,pager2003mark}. As a result, AI systems trained on observational data learn patterns shaped by historical processes, posing a fundamental question: under what conditions do AI systems replicate, exacerbate, or mitigate existing disparities?

In light of the fairness challenges described above, a growing literature has explored approaches to mitigating bias in automated systems. Here, we specifically mention the causal approach \citep{kusner2017counterfactual, kilbertus2017avoiding, nabi2018fair, zhang2018fairness, zhang2018equality, wu2019pc, chiappa2019path, plevcko2020fair, plecko2022causal}, which has two major benefits. 
First, causal notions allow for human-understandable and interpretable definitions and metrics of fairness, which are tied to the causal mechanisms transmitting the change between groups. Secondly, they offer a language that is aligned with the legal notions of discrimination, such as the disparate impact doctrine \citep{barocas2016big}.

Within the context of causal fairness, it is useful to distinguish between different tasks \citep{plecko2022causal}, namely (1) bias detection and quantification for existing outcomes or decision policies; (2) construction of fair predictions of an outcome; (3) construction of fair decision-making policies that are intended to be implemented in the real-world. A prototypical setting for fair prediction, known in prior work as the standard fairness model (SFM) \citep{plecko2022causal}, is shown in Fig.~\ref{fig:sfm}. Here, $X$ represents the protected attribute, $Z$ a possibly multidimensional set of confounders, $W$ a set of mediators, and $Y$ the outcome variable, while $\widehat{Y}$ represents an automated predictor. The causal approach aims to understand how different causal effects from $X$ to $Y$ (direct $X \to Y$, indirect $X \to W \to Y$, and spurious $X \bidir Z \to Y$) are inherited by $\widehat Y$ \cite{plecko2023reconciling}, and how to correct for such bias.

\begin{figure}[t]
    \centering
    \begin{subfigure}{0.32\textwidth}
    \centering
    \begin{tikzpicture}[>=stealth, rv/.style={thick}, rvc/.style={triangle, draw, thick, minimum size=7mm}, node distance=18mm]
	 \pgfsetarrows{latex-latex};

\node (e) at (0,3) {};

\node[rv] (v1) at (1,4.7) {$X$};
\node[rv] (v2) at (2.5,5.9) {$Z$};
\node[rv] (v3) at (2.5,3.5) {$W$};
\node[rv] (v4) at (4,4.1) {$Y$};
\node[rv] (v5) at (4,5.3) {$\widehat Y$};

\draw[<->,thick, dashed] (v1) edge[bend left=10] (v2);
\draw[->,thick] (v1) -- (v3);
\draw[->,thick] (v2) -- (v3);
\draw[->,thick] (v1) -- (v4);
\draw[->,thick] (v2) -- (v4);
\draw[->,thick] (v3) -- (v4);
\draw[->,thick] (v1) -- (v5);
\draw[->,thick] (v2) -- (v5);
\draw[->,thick] (v3) -- (v5);
\end{tikzpicture}
    \caption{Standard Fairness Model \citep{plecko2022causal}.}
    \label{fig:sfm}
    \end{subfigure}
    \hfill
    \begin{subfigure}{0.32\textwidth}
    \centering
    \begin{tikzpicture}[>=stealth, rv/.style={thick}, rvc/.style={draw,minimum size=5mm,inner sep=1pt}, node distance=18mm]
\pgfsetarrows{latex-latex};

\node (e) at (0,3) {};

\node[rv] (v1) at (1,4.7) {$X$};
\node[rv] (v2) at (2.5,5.9) {$Z$};
\node[rv] (v3) at (2.5,3.5) {$W$};
\node[rv] (v4) at (4,4.7) {$Y$};

\node[rvc] (v5) at (4,5.7) {$S$};

\draw[<->,thick, dashed] (v1) edge[bend left=10] (v2);
\draw[->,thick] (v1) -- (v3);
\draw[->,thick] (v1) -- (v4);
\draw[->,thick] (v2) -- (v3);
\draw[->,thick] (v2) -- (v4);
\draw[->,thick] (v3) -- (v4);
\draw[->,thick, orange] (v5) -- (v4);

\end{tikzpicture}
    \caption{S-SFM for machine learning.}
    \label{fig:sfm-ml}
    \end{subfigure}
    \hfill
    \begin{subfigure}{0.32\textwidth}
    \centering
    \begin{tikzpicture}[x=1cm,y=1cm,>={Stealth[length=2.5mm]}]
\tikzset{
  var/.style={circle,fill=black,inner sep=1.4pt},
  mod/.style={circle,draw,thick,inner sep=0pt,minimum size=4.5mm,font=\bfseries\footnotesize},
  lbl/.style={font=\scriptsize},
  rv/.style={thick},
  rvc/.style={draw,minimum size=5mm,inner sep=1pt}
}

\fill[blue!2,rounded corners=2pt] (0.8,3) rectangle (4.2,2.1);
\draw[blue!60,rounded corners=2pt] (0.8,3) rectangle (4.2,2.1);

\node[inner sep=0pt, fit={(0.8,3) (4.2,2.1)}] (rectnode) {};

\node[rv] (v1) at (1,4.7) {$X$};
\node[rv] (v2) at (2.5,5.7) {$Z$};
\node[rv] (v3) at (2.5,3.6) {$W$};
\node[rv] (v4) at (4,4.7) {$Y$};
\node[rvc] (sxz) at (1,5.7) {$S$};
\node[rvc] (sw) at (1,3.6) {$S$};

\node[rvc] (sy) at (4,5.6) {$S$};

\draw[<->,thick, dashed] (v1) edge[bend left=10] (v2);
\draw[->,thick] (v1) -- (v3);
\draw[->,thick] (v2) -- (v3);
\draw[->,thick] (v1) -- (v4);
\draw[->,thick] (v2) -- (v4);
\draw[->,thick] (v3) -- (v4);

\draw[->,thick, orange] (sxz) -- (v1);
\draw[->,thick, orange] (sxz) -- (v2);

\draw[->,thick, orange] (sw) -- (v3);
\draw[->,thick, orange] (sy) -- (v4);

\node[mod] (I) at (1.5,2.7) {I};
\node[lbl,below=0mm of I] {Image};

\node (dots) at (2.5, 2.55) {$\dots$};

\node[mod] (T) at (3.5,2.7) {T};
\node[lbl,below=0mm of T] {Text};

\draw[->, gray] (v1) -- (rectnode);
\draw[->, gray] (v2) edge[bend left=25] (rectnode);
\draw[->, gray] (v3) -- (rectnode);
\draw[->, gray] (v4) -- (rectnode);
\end{tikzpicture}
    \caption{S-SFM for generative AI.}
    \label{fig:sfm-genai}
\end{subfigure}
\label{fig:new-sfms}
\caption{Standard Fairness Model (SFM) for machine learning and generative AI settings.}
\vspace{-0.1in}
\end{figure}

The setting of generative AI, however, is markedly more complex, along several dimensions. 
First, we most commonly do not have access to data on covariates $(X, Z, W, Y)$ directly, but rather have access to lower-level data of different modalities, such as images or text, whose dynamics and structure are influenced by a causal model (see Fig.~\ref{fig:sfm-genai}).
Second, when working with tabular data, as in the standard ML setting, the decision-maker is constructing a single mechanism $f_{\widehat{Y}}$ that maps covariates $(X, Z, W)$ to a prediction $\widehat{Y}$, while all other mechanisms $f_X, f_Z, f_W$ are \textit{inherited from the real-world}. 
In generative AI, this is no longer the case, and we are (implicitly) constructing a generative model that can map an arbitrary set of variables $\{ V_1, \dots, V_k \}$ to another arbitrary set of variables $\{V_{k+1}, \dots, V_\ell$\}. As a consequence, a generative model $M$ may have its own beliefs about the relationship between variables $X, Z$, and $W$ -- that is, it constructs new mechanisms $f^M_X, f^M_Z, f^M_W$ -- entirely opposite to the case with tabular data where such mechanism were determined from the real world. 
The final challenge stems from the fact that generative models do not have access to the correct variable correlations in the real world, as recent work demonstrates \citep{plecko2025epidemiology}. Thus, variable relationships in the generative model could be quite different from the real-world relationships. 

These challenges illustrate that a fresh approach is needed for causal fairness in generative AI, which is the topic of this paper. In this context, our contributions are as follows:
\begin{enumerate}[label=(\roman*)]
    \item We formalize the problem of causal fairness in generative AI, showing how it differs from the standard ML setting -- and bring the two settings under the same theoretical umbrella,
    \item We derive new causal decomposition results (Thm.~\ref{thm:new-decomp}) -- allowing for a more granular quantification of fairness impacts along (a) different causal pathways and (b) arising from different mechanisms (generative model vs. real world),
    \item We establish identification conditions, and introduce efficient estimators for causal quantities of interest (Prop.~\ref{prop:id-est}), 
    \item We apply our methodology to analyze LLMs for (a) racial bias in substance abuse; (b) racial bias in chronic disease burden; (c) sex bias in income levels.
\end{enumerate}
We now provide some intuition for the developments in this manuscript. In our framework, the standard causal fairness ML setting, shown in Fig.~\ref{fig:sfm}, can be modeled as in Fig.~\ref{fig:sfm-ml}. Here, the node labeled $S$ indicates whether the outcome variable $Y$ is sampled according to the real-world mechanism $f_Y^{rw}$, or according to the ML model $f_Y^{m}$. In this way, the $S$-node can be used to denote from which generative environment (real world or model) the data is sampled. In context of generative AI, as mentioned earlier, the $S$-node must point to all covariates $X, Z, W$, and $Y$, since generative models are able to construct their own beliefs about the relationship between variables.

\paragraph{Relationship to Previous Literature.}
Our work is related to the literature on causal fairness \citep{plecko2022causal}, much of which is focused on the typical ML setting \citep{kusner2017counterfactual, chiappa2019path}. Our approach builds on some of these prior works, and extends the scope to generative AI settings. 
Secondly, our work is related to approaches for achieving counterfactual fairness in text/image classification and generation \citep{cheong2022counterfactual, jung2024counterfactually, garg2019counterfactual, kim2021counterfactual, joo2020gender}. However, such works usually focus only on counterfactual fairness, and fail to model the latent causal process and quantify granularly which causal mechanisms contribute to disparities. Finally, we mention existing work that performs statistical quantification of disparities in generative AI \citep{luccioni2023stable, naik2023social, de2019bias, hendricks2018women}, including dedicated bias benchmarks \citep{nadeem2021stereoset, nangia2020crows, liang2022holistic, smith2022m, wang2024vlbiasbench}. Our approach can be used to analyze the same phenomena through a different, causal lens, offering insight into \textit{which mechanisms} drive the disparities.

\subsection{Preliminaries}
We use the language of structural causal models (SCMs) as our basic semantical framework \citep{pearl:2k}. A structural causal model (SCM) is
a tuple $\mathcal{M} := \langle V, U, \mathcal{F}, P(u)\rangle$ , where $V$, $U$ are sets of
endogenous (observable) and exogenous (latent) variables 
respectively, $\mathcal{F}$ is a set of functions $f_{V_i}$,
one for each $V_i \in V$, where $V_i \gets f_{V_i}(\pa(V_i), U_{V_i})$ for some $\pa(V_i)\subseteq V$ and
$U_{V_i} \subseteq U$. $P(u)$ is a strictly positive probability measure over $U$. Each SCM $\mathcal{M}$ is associated to a causal diagram $\mathcal{G}$ over the node set $V$, where $V_i \rightarrow V_j$ if $V_i$ is an argument of $f_{V_j}$, and $V_i \bidir V_j$ if the corresponding $U_{V_i}, U_{V_j}$ are not independent. An instantiation of the exogenous variables $U = u$ is called a \textit{unit}. By $Y_{x}(u)$ we denote the potential response of $Y$ when setting $X=x$ for the unit $u$, which is the solution for $Y(u)$ to the set of equations obtained by evaluating the unit $u$ in the submodel $\mathcal{M}_x$, in which all equations in $\mathcal{F}$ associated with $X$ are replaced by $X = x$. Furthermore, we use existing notions of direct, indirect, and spurious effects \citep{zhang2018fairness, plecko2022causal}: $x\text{-DE}_{x_0, x_1}(y\mid x) = P(y_{x_1, W_{x_0}} \mid x) - P(y_{x_0}\mid x)$, $x\text{-IE}_{x_1, x_0}(y\mid x) = P(y_{x_1, W_{x_0}}\mid x) - P(y_{x_1} \mid x)$, and $x\text{-SE}_{x_1, x_0}(y) = P(y_{x_1} \mid x_0) - P(y_{x_1} \mid x_1)$. Based on these, the TV$_{x_0, x_1}(y)$ measure $\ex[Y \mid X = x_1] - \ex[Y \mid X=x_0]$ can be decomposed as:
    \begin{align}
        \text{TV}_{x_0, x_1}(y) &=  x\text{-DE}_{x_0, x_1}(y\mid x_0) 
        - {x\text{-IE}_{x_1, x_0}(y\mid x_0)}
        -x\text{-SE}_{x_1, x_0}(y).
    \end{align}
A graphical representation for the $x$-specific direct effect is shown in Fig.~\ref{fig:x-de}. The left side represents the potential outcome $Y_{x_1, W_{x_0}} \mid X = x_0$, where $X=x_1$ along the direct path $X \to Y$, while $X = x_0$ along the indirect path $X\to W \to Y$, and $X=x_0$ along back-door paths between $X$ and $Y$. The right side represents the potential outcome $Y_{x_0, W_{x_0}} \mid X = x_0$, where $X = x_0$ along all three causal paths (direct, indirect, spurious). Therefore, $x$-DE measures the effect of a $x_0 \to x_1$ transition along the direct effect, for individuals with $X = x_0$, while $X = x_0$ along the indirect path. 
\begin{figure}
\centering

\begin{subfigure}{0.51\linewidth}
\centering
\begin{tikzpicture}
 [>=stealth, rv/.style={thick}, rvc/.style={triangle, draw, thick, minimum size=7mm}, node distance=18mm]
 \pgfsetarrows{latex-latex};

 \node[rv] (z1) at (0,1) {$Z$};
 \node[rv] (x1) at (-1.5,0) {$x_0$};
 \node[rv] (x1x) at (-0.9,0) {$x_1$};
 \node[rv] (w1) at (0,-1) {$W$};
 \node[rv] (y1) at (1.5,0) {$Y$};

 \draw[->] (x1) -- (w1);
 \draw[->] (z1) -- (y1);
 \path[->] (x1x) edge[bend left=0] (y1);
 \path[<->] (x1) edge[bend left=30, dashed] (z1);
 \draw[->] (w1) -- (y1);
 \draw[->] (z1) -- (w1);

 \node (mns) at (2.25, 0) {\Large $-$};
 \node (a) at (0, -1.5) {$P(y_{x_1, W_{x_0}} \mid x_0)$};
 \node (b) at (4.5, -1.5) {$P(y_{x_0} \mid x_0)$};

 \node[rv] (z2) at (4.5,1) {$Z$};
 \node[rv] (x2) at (3,0) {$x_0$};
 \node[rv] (w2) at (4.5,-1) {$W$};
 \node[rv] (y2) at (6,0) {$Y$};

 \draw[->] (x2) -- (w2);
 \draw[->] (z2) -- (y2);
 \path[->] (x2) edge[bend left=0] (y2);
 \path[<->] (x2) edge[bend left=30, dashed] (z2);
 \draw[->] (w2) -- (y2);
 \draw[->] (z2) -- (w2);

\end{tikzpicture}
\caption{$x$-specific direct effect.}
\label{fig:x-de}
\end{subfigure}
\hfill
\begin{subfigure}{0.48\linewidth}
\centering
\begin{tikzpicture}
 [>=stealth, rv/.style={thick}, rvc/.style={triangle, draw, thick, minimum size=4mm}, node distance=10mm]
 \pgfsetarrows{latex-latex};

 \node[rv] (z) at (0,1) {$Z$};

 \node[rv] (xw) at (-1.5,-0.45) {$x_w$};
 \node[rv] (xz) at (-1.5,0.5) {$X=x_z$};
 \node[rv] (xy) at (-0.8,-0.1) {$x_y$};

 \node[rv] (sz) at (-1.5, 1.75) {\fbox{$s_z$}};
 \node[rv] (sw) at (-1.5, -1) {\fbox{$s_w$}};
 \node[rv] (sy) at (1.5, 1) {\fbox{$s_y$}};

 \node[rv] (w) at (0,-1) {$W$};
 \node[rv] (y) at (1.5,0) {$Y$};

 \draw[->] (xw) -- (w);
 \draw[->] (xy) -- (y);
 \path[<->] (xz) edge[bend left=30, dashed] (z);

 \draw[->] (z) -- (y);
 \draw[->] (z) -- (w);
 \draw[->] (w) -- (y);

 \draw[->,thick, orange] (sz) -- (xz);
 \draw[->,thick, orange] (sz) -- (z);
 \draw[->,thick, orange] (sw) -- (w);
 \draw[->,thick, orange] (sy) -- (y);

\end{tikzpicture}
\caption{A generic potential outcome in $S$-SFM.}
\label{fig:ssfm-po}
\end{subfigure}

\caption{Graphical models for (a) $x$-specific direct effect; (b) generic potential outcome in $S$-SFM.}
\label{fig:combined}
\end{figure}
Similar interpretations can be given for the indirect and spurious effects. The $x$-specific indirect effect captures the reverse transition $x_1 \to x_0$ along the path $X \to W \to Y$; and the spurious effect quantifies how conditioning on $X=x_0$ compares to conditioning on $X=x_1$, for the potential outcome $Y_{x_1}$ where $X$ is set to $x_1$ by intervention. 
\section{Causal Fairness in Generative AI} \label{sec:cfgai}
We start by introducing the required structural semantics for the setting of fairness in generative AI. In particular, as noted in the introduction, our modeling approach takes into account that data generation happens in two distinct environments, namely the real-world (denoted by $S=rw$, or $S=0$) and the generative model ($S=gm$, or $S=1$). For each of these two environments, we can write down explicitly the underlying structural causal models $M^{rw}, M^{gm}$ over variables $X, Z, W, Y$:

\begin{minipage}{.45\linewidth}
\begin{empheq}[left={M^{rw} = \empheqlbrace}]{align} 
  X, Z &\gets f^{rw}_{X,Z}(U_{XZ})  \label{eq:f-xz-rw}\\
  W &\gets f^{rw}_W(X, Z, U_W) \\
  Y &\gets f^{rw}_Y(X, Z, W, U_Y)\label{eq:f-y-rw} \\
  U &\sim P^{rw}(U)
\end{empheq}
\end{minipage}%
\hfill \text{, } \hfill
\begin{minipage}{.48\linewidth}
\vspace{-0.14in}
\begin{empheq}[left={M^{gm} = \empheqlbrace}]{align}
  X, Z &\gets f^{gm}_{X,Z}(U_{XZ})  \label{eq:f-xz-gm}\\
  W &\gets f^{gm}_W(X, Z, U_W) \\
  Y &\gets f^{gm}_Y(X, Z, W, U_Y)\label{eq:f-y-gm} \\
  U &\sim P^{gm}(U).
\end{empheq} 
\end{minipage}

\vspace{0.1in}
\noindent Here, variables $X, Z$ are put together in a single mechanism $f_{X,Z}$ that depends on the noise variable $U_{XZ}$, whereas $f_W, f_Y$ mechanisms depend on their causal parents and respective noise variables. Importantly, we note that the mechanisms in the real world and the generative model may be distinct. To put the two models $M^{rw}, M^{gm}$ under the same umbrella, we introduce a type of selection node \citep{pearl2011transportability}, labeled $S$, which influences all of the covariates (see Fig.~\ref{fig:sfm-genai}). 
In particular, the node $S$ encodes whether data is generated from the real world mechanism $f^{rw}_{V_i}$ (and it's noise distribution $P^{rw}(U_{V_i})$) or the generative model's mechanism $f^{gm}_{V_i}$ (and it's noise distribution). Crucially, this allows us to represent the two SCMs jointly, 
\begin{empheq}[left={M = \empheqlbrace}]{align}
  S &\sim P(S=rw)=p, P(S=gm)=1-p \\
  X, Z &\gets f^{S}_{X,Z}(U_{XZ})  \label{eq:f-xz}\\
  W &\gets f^{S}_W(X, Z, U_W) \\
  Y &\gets f^{S}_Y(X, Z, W, U_Y)\label{eq:f-y} \\
  U &\sim P^{S}(U).
\end{empheq}
In other words, the variable $S$ determines which mechanisms (and which noise variables) are used for determining the value of $(X, Z)$, $W$, and $Y$ variables. When $S=0$, we have the generating process describing the real world, while $S=1$ leads to the generating process of the AI model. 
Formally, the $S$-node consists of separate selection nodes $S = (S_{XZ}, S_W, S_Y)$ for different variables, and this allows us to consider mechanism replacements independently \citep[Ch.~10]{bareinboim2025causalai}. Nonetheless, we write $S$ without subscript in most places, as the subscript can often be inferred from context.
The SCM described above also has a corresponding graphical representation which we call $S$-SFM:
\begin{definition}[$S$-SFM]
    The $S$-standard fairness model is the SFM model with an $S$-node pointing to each of the covariate sets $\{X, Z\}, W$ and $Y$, as shown in Fig.~\ref{fig:sfm-genai}. $\hfill\square$
\end{definition}
The usefulness of the above semantics comes from our ability to consider different interventions on the $S$-node along the edges $S\to \{X,Z\}$, $S \to W$, and $S \to Y$. We now introduce the notation for such potential outcomes. We write
\begin{align}
    Y^{s_y}_{x_y, W^{s_w}_{x_w}} \mid X = x_z, S = s_z,
\end{align}
for the potential outcome visualized in Fig.~\ref{fig:ssfm-po}. Specifically, the value $S=s_z$ determines whether the mechanism $f_{X,Z}$ corresponds to the real-world mechanism or the generative model's mechanism; while the $X=x_z$ determines the value of $X$ that propagates along spurious paths from $X$ to $Y$. Further, $S=s_w$ determines whether $W$ is drawn from the real-world mechanism, or the generative model one; while $X=x_w$ determines the value of $X$ along the indirect path $X\to W \to Y$. Finally, $S=s_y$ controls whether $Y$ is drawn from the real-world mechanism, or the generative model one, while $X=x_y$ determines the value of $X$ along the direct edge $X\to Y$. 
\begin{figure}
    \centering
    \begin{subfigure}{0.24\textwidth}
        \centering
        \resizebox{\linewidth}{!}{\begin{tikzpicture}
        [>=stealth, rv/.style={thick}, rvc/.style={triangle, draw, thick, minimum size=4mm}, node distance=10mm]
        \pgfsetarrows{latex-latex};
        \node[rv] (z) at (0,1) {$Z$};
        
        \node[rv] (xz) at (-1.65,0.0) {$x_0$};
        
        \node[rv] (xy) at (-0.8,-0.0) {$x_1$};
        \node[rv] (sz) at (-1.4, 1.75) {\fbox{$s_0$}};
        \node[rv] (sw) at (-1.5, -1) {\fbox{$s_0$}};
        \node[rv] (sy) at (1.5, 1) {\fbox{$s_0$}};
        \node[rv] (w) at (0,-1) {$W$};
        \node[rv] (y) at (1.5,0) {$Y$};
        \draw[->] (xz) -- (w);
        \draw[->] (xy) -- (y);
        \path[<->] (xz) edge[bend left=30, dashed] (z);
        \draw[->] (z) -- (y);
        \draw[->] (z) -- (w);
        \draw[->] (w) -- (y);
        
        \draw[->,thick,orange] (sz) -- (xz);
        \draw[->,thick,orange] (sz) -- (z);
        \draw[->,thick,orange] (sw) edge[bend left=0] (w);
        \draw[->,thick,orange] (sy) edge[bend left=0] (y);
        \end{tikzpicture}}
        \caption{$Y^{s_0}_{x_1, W^{s_0}_{x_0}} \mid x_0, s_0$.}
    \end{subfigure}
    \hfill
    \begin{subfigure}{0.24\textwidth}
        \centering
        \resizebox{\linewidth}{!}{\begin{tikzpicture}
        [>=stealth, rv/.style={thick}, rvc/.style={triangle, draw, thick, minimum size=4mm}, node distance=10mm]
        \pgfsetarrows{latex-latex};
        \node[rv] (z) at (0,1) {$Z$};
        
        \node[rv] (xz) at (-1.65,0.0) {$x_0$};
        
        \node[rv] (sz) at (-1.4, 1.75) {\fbox{$s_0$}};
        \node[rv] (sw) at (-1.5, -1) {\fbox{$s_0$}};
        \node[rv] (sy) at (1.5, 1) {\fbox{$s_0$}};
        \node[rv] (w) at (0,-1) {$W$};
        \node[rv] (y) at (1.5,0) {$Y$};
        \draw[->] (xz) -- (w);
        \draw[->] (xz) -- (y);
        \path[<->] (xz) edge[bend left=30, dashed] (z);
        \draw[->] (z) -- (y);
        \draw[->] (z) -- (w);
        \draw[->] (w) -- (y);
        
        \draw[->,thick,orange] (sz) -- (xz);
        \draw[->,thick,orange] (sz) -- (z);
        \draw[->,thick,orange] (sw) edge[bend left=0] (w);
        \draw[->,thick,orange] (sy) edge[bend left=0] (y);
        \end{tikzpicture}}
        \caption{$Y^{s_0}_{x_0, W^{s_0}_{x_0}} \mid x_0, s_0$.}
    \end{subfigure}
    \hfill
    \begin{subfigure}{0.24\textwidth}
        \centering
        \resizebox{\linewidth}{!}{\begin{tikzpicture}
        [>=stealth, rv/.style={thick}, rvc/.style={triangle, draw, thick, minimum size=4mm}, node distance=10mm]
        \pgfsetarrows{latex-latex};
        \node[rv] (z) at (0,1) {$Z$};
        
        \node[rv] (xz) at (-1.65,0.0) {$x_0$};
        
        \node[rv] (xy) at (-0.8,-0.0) {$x_1$};
        \node[rv] (sz) at (-1.4, 1.75) {\fbox{$s_0$}};
        \node[rv] (sw) at (-1.5, -1) {\fbox{$s_0$}};
        \node[rv] (sy) at (1.5, 1) {\fbox{$s_1$}};
        \node[rv] (w) at (0,-1) {$W$};
        \node[rv] (y) at (1.5,0) {$Y$};
        \draw[->] (xz) -- (w);
        \draw[->] (xy) -- (y);
        \path[<->] (xz) edge[bend left=30, dashed] (z);
        \draw[->] (z) -- (y);
        \draw[->] (z) -- (w);
        \draw[->] (w) -- (y);
        
        \draw[->,thick,orange] (sz) -- (xz);
        \draw[->,thick,orange] (sz) -- (z);
        \draw[->,thick,orange] (sw) edge[bend left =0] (w);
        \draw[->,thick,orange] (sy) edge[bend left=0] (y);
        \end{tikzpicture}}
        \caption{$Y^{s_1}_{x_1, W^{s_0}_{x_0}} \mid x_0, s_0$.}
    \end{subfigure}
    \hfill
    \begin{subfigure}{0.24\textwidth}
        \centering
        \resizebox{\linewidth}{!}{\begin{tikzpicture}
        [>=stealth, rv/.style={thick}, rvc/.style={triangle, draw, thick, minimum size=4mm}, node distance=10mm]
        \pgfsetarrows{latex-latex};
        \node[rv] (z) at (0,1) {$Z$};
        
        \node[rv] (xz) at (-1.65,0.0) {$x_0$};
        
        \node[rv] (sz) at (-1.4, 1.75) {\fbox{$s_0$}};
        \node[rv] (sw) at (-1.5, -1) {\fbox{$s_0$}};
        \node[rv] (sy) at (1.5, 1) {\fbox{$s_1$}};
        \node[rv] (w) at (0,-1) {$W$};
        \node[rv] (y) at (1.5,0) {$Y$};
        \draw[->] (xz) -- (w);
        \draw[->] (xz) -- (y);
        \path[<->] (xz) edge[bend left=30, dashed] (z);
        \draw[->] (z) -- (y);
        \draw[->] (z) -- (w);
        \draw[->] (w) -- (y);
        
        \draw[->,thick,orange] (sz) -- (xz);
        \draw[->,thick,orange] (sz) -- (z);
        \draw[->,thick,orange] (sw) edge[bend left=0] (w);
        \draw[->,thick,orange] (sy) edge[bend left=0] (y);
        \end{tikzpicture}}
        \caption{$Y^{s_1}_{x_0, W^{s_0}_{x_0}} \mid x_0, s_0$.}
    \end{subfigure}
    \caption{Quantifying differences using $S$-SFM potential outcomes.}
    \label{fig:x-de-with-S}
\end{figure}
We now illustrate why such potential outcomes provide us with a useful tool for understanding fairness in generative AI systems. Consider the difference between the potential outcomes in (a) and (b) of Fig.~\ref{fig:x-de-with-S}. For both potential outcomes, $S=s_0$ for each mechanism, meaning that all the mechanisms are from the real world. The difference between the two potential outcomes lies in the value of $X$ along the direct path $X\to Y$, and thus captures the direct effect of a $x_0 \to x_1$ transition in the real world. 
We contrast this with the difference between (c) and (d) of Fig.~\ref{fig:x-de-with-S}, where the same direct effect of an $x_0 \to x_1$ transitions is captured, but in the world where the $f_Y$ mechanism is taken from the generative model ($s_1 \to Y$), while the $f_{X, Z}, f_W$ mechanisms are taken from the real world. Interestingly, note that the difference between the terms (referring to the labels in Fig.~\ref{fig:x-de-with-S})  
\begin{align}
    [(c) - (d)] - [(a) - (b)]
\end{align}
captures how much the direct effect changes when the real world mechanism $f_Y^{rw}$ is replaced by the generative model's mechanism $f_Y^{gm}$. In our analysis of fairness impacts of generative models, such quantities will play a key role, which motivates the following definition:
\begin{definition}[$S$-modified $x$-specific Effects]
The $S$-modified $x$-specific \{direct, indirect, spurious\} effects of $X$ on $Y$ are defined as:
    \begin{align}
        x\text{-DE}^{s_z, s_w, s_y}_{x_0, x_1}(y\mid x) &= \ex(Y^{s_y}_{x_1, W^{s_w}_{x_0}} \mid X = x, S=s_z) - \ex(Y^{s_y}_{x_0, W^{s_w}_{x_0}}\mid X = x, S = s_z)  \\
        x\text{-IE}^{s_z, s_w, s_y}_{x_1, x_0}(y\mid x) &= \ex(Y^{s_y}_{x_1, W^{s_w}_{x_0}} \mid X = x, S = s_z) - \ex(Y^{s_y}_{x_1, W^{s_w}_{x_1}} \mid X= x, S = s_z)\\
        x\text{-SE}^{s_z, s_w, s_y}_{x_1, x_0}(y) &= \ex(Y^{s_y}_{x_1, W^{s_w}_{x_1}} \mid X = x_0, S = s_z) - \ex(Y^{s_y}_{x_1, W^{s_w}_{x_1}} \mid X = x_1, S = s_z).
    \end{align}
    Based on these measures, we can further define second-order $S$-based differences,
    \begin{align}
        \Delta \text{CE}_{x, x'}(y) ^{s_z\to s_z', s_w\to s_w', s_y
    \to s_y'} = \text{CE}^{s'_z, s'_w, s'_y}_{x, x'}(y) -  \text{CE}^{s_z, s_w, s_y}_{x, x'}(y),
    \end{align}
    where CE stands for any of the $x$-specific effects \{DE, IE, SE\}. In case any $s_{V_i}=s_{V_i}'$, we simply indicate the $s_{V_i}$ value instead of a transition $s_{V_i} \to s_{V_i}'$. $\hfill\square$
\end{definition}
After introducing the key building blocks, we now describe how these elements can be used to analyze fairness in generative AI systems.

\subsection{Inferential Challenge}
A common starting point in investigating fairness is the marginal disparity in the outcome between groups, which in the real world can be measured as:
\begin{align}
    \text{TV}^{rw}_{x_0, x_1} = \ex^{rw}[Y \mid X = x_1] - \ex^{rw}[Y \mid X = x_0].
\end{align}
Similarly, we can quantify the same disparity in the generative model via:
\begin{align}
    \text{TV}^{gm}_{x_0, x_1} = \ex^{gm}[Y \mid X = x_1] - \ex^{gm}[Y \mid X = x_0].
\end{align}
The key goal of our approach is to provide a fine-grained understanding of how the disparity in the world $\text{TV}^{rw}_{x_0, x_1}$ changes to $\text{TV}^{gm}_{x_0, x_1}$ in the generative model. That is, our quantity of interest is $\Delta \text{TV}_{x_0, x_1}^{s_0 \to s_1}(y) \overset{\Delta}{=} \text{TV}^{gm}_{x_0, x_1} - \text{TV}^{rw}_{x_0, x_1}$, which can be analyzed based on the following result (all proofs are provided in App.~\ref{appendix:proofs}): 
\begin{theorem}[$S$-based TV Decomposition]
\label{thm:new-decomp}
    The difference between the real-world and generative model TV measures, $\Delta \text{TV}_{x_0, x_1}^{s_0 \to s_1}(y)$ can be decomposed as:
    \begin{align} \label{eq:delta-tv-decomp}
        \Delta \text{TV}_{x_0, x_1}^{s_0 \to s_1}(y) &= \Delta x\text{-DE}^{s_0 \to s_1}_{x_0, x_1}(y\mid x_0) - 
        \Delta x\text{-IE}^{s_0 \to s_1}_{x_1, x_0}(y\mid x_0) 
        - \Delta x\text{-SE}^{s_0 \to s_1}_{x_1, x_0}(y).
    \end{align}
    Furthermore, each of the contributions $\Delta\text{-CE} \in \{\Delta x\text{-DE}, \Delta x\text{-IE}, \Delta x\text{-SE}\}$ can be additionally decomposed as:
    \begin{equation*} \label{eq:delta-ce-decomp}
    \Delta \text{-CE}^{s_0 \to s_1}_{x_0, x_1}(y) = \Delta \text{-CE}^{s_0, s_0, s_0 \to s_1}_{x_0, x_1}(y) + \Delta \text{-CE}^{s_0, s_0 \to s_1, s_1}_{x_0, x_1}(y) 
    +\Delta \text{-CE}^{s_0 \to s_1, s_1, s_1}_{x_0, x_1}(y). \refstepcounter{equation}\tag*{$\square$ (\theequation)}
\end{equation*}
\end{theorem}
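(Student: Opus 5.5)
The plan is to reduce both parts of Theorem~\ref{thm:new-decomp} to telescoping identities. The only non-algebraic ingredient is that the ``pure'' potential outcomes, where every selection node takes the same value, collapse to observational quantities.

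\textbf{Step 1 (consistency).} Fix $s\in\{s_0,s_1\}$ and write $\text{CE}^{s}$ for $\text{CE}^{s,s,s}$. Setting $s_z=s_w=s_y=s$ and $x_w=x_y=x_z=x$ makes $Y^{s}_{x,W^{s}_{x}}\mid X=x,S=s$ coincide with $Y$ drawn from the SCM $M^{s}$ conditioned on $X=x$. This follows from the composition and consistency axioms applied within $M$: intervening on $X$ to the value it already takes leaves $W$ and $Y$ unchanged, and the selection value $s$ selects exactly the mechanisms $f^{s}$ and the noise $P^{s}(U)$. Hence $\ex(Y^{s}_{x,W^{s}_x}\mid X=x,S=s)=\ex^{s}[Y\mid X=x]$.

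\textbf{Step 2 (TV decomposition for each fixed $s$).} With this identification, I apply the standard argument behind the classical decomposition. Starting from
\[
\text{TV}^{s}_{x_0,x_1}=\ex(Y^{s}_{x_1,W^{s}_{x_1}}\mid x_1,s)-\ex(Y^{s}_{x_0,W^{s}_{x_0}}\mid x_0,s),
\]
I add and subtract $\ex(Y^{s}_{x_1,W^{s}_{x_1}}\mid x_0,s)$ and $\ex(Y^{s}_{x_1,W^{s}_{x_0}}\mid x_0,s)$. Comparing with the definitions of the $S$-modified effects, the resulting pieces are exactly $x\text{-DE}^{s}_{x_0,x_1}(y\mid x_0)$, $-x\text{-IE}^{s}_{x_1,x_0}(y\mid x_0)$ and $-x\text{-SE}^{s}_{x_1,x_0}(y)$. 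I will check the signs term by term:
\begin{itemize}
\item the SE term is $\ex(\cdot\mid x_0)-\ex(\cdot\mid x_1)$ of $Y^{s}_{x_1,W^{s}_{x_1}}$, so it enters with a minus sign;
\item the IE term is $Y_{x_1,W_{x_0}}-Y_{x_1,W_{x_1}}$ given $x_0$, so it also enters with a minus sign.
\end{itemize}
Subtracting the identity for $s=s_0$ from the one for $s=s_1$ and using linearity of $\Delta$ gives \eqref{eq:delta-tv-decomp}. Here $\Delta\text{CE}^{s_0\to s_1}$ means the transition $(s_0,s_0,s_0)\to(s_1,s_1,s_1)$.

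\textbf{Step 3 (mechanism-wise telescoping).} For the second claim, I pass through the intermediate configurations $(s_0,s_0,s_0)\to(s_0,s_0,s_1)\to(s_0,s_1,s_1)\to(s_1,s_1,s_1)$. By the definition of second-order differences,
\[
\text{CE}^{s_1,s_1,s_1}-\text{CE}^{s_0,s_0,s_0}=[\text{CE}^{s_0,s_0,s_1}-\text{CE}^{s_0,s_0,s_0}]+[\text{CE}^{s_0,s_1,s_1}-\text{CE}^{s_0,s_0,s_1}]+[\text{CE}^{s_1,s_1,s_1}-\text{CE}^{s_0,s_1,s_1}],
\]
which is a pure telescoping sum. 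It only requires each intermediate $\text{CE}^{s_z,s_w,s_y}$ to be well defined, and that holds because $S=(S_{XZ},S_W,S_Y)$ has separately intervenable components.

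\textbf{Main obstacle.} I expect the main difficulty to be Step~1, specifically stating precisely the semantics under which conditioning on $S=s_z$ together with interventions on $S_W,S_Y$ is meaningful and consistency holds. To handle this, I would define the potential outcome as the solution in the submodel of $M$ where $f_W$ uses $(f^{s_w}_W,U^{s_w}_W)$ and $f_Y$ uses $(f^{s_y}_Y,U^{s_y}_Y)$, with $(X,Z)$ drawn from $f^{s_z}_{X,Z}$. When all three indices agree, this submodel is literally $M^{s}$, and the identification in Step~1 follows.
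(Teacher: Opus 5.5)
Your proposal is correct and follows essentially the same route as the paper's proof. In both, consistency rewrites each $\text{TV}^{s}$ counterfactually, and adding and subtracting $\ex(Y^{s}_{x_1,W^{s}_{x_1}}\mid x_0,s)$ and $\ex(Y^{s}_{x_1,W^{s}_{x_0}}\mid x_0,s)$ gives the per-environment DE/IE/SE identity; that identity is differenced across $s_0,s_1$ and then telescoped through $(s_0,s_0,s_1)$ and $(s_0,s_1,s_1)$. Your signs check out, and your Step~1 states explicitly the identification $\text{CE}^{s}=\text{CE}^{s,s,s}$, which the paper uses only implicitly.
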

The above theorem is a key insight allowing us to understand how disparities in the generative model are structured, compared to the disparities in the real world. First, the overall disparity difference $\Delta \text{TV}_{x_0, x_1}^{s_0 \to s_1}(y)$ is decomposed into contributions from differences in direct, indirect, and spurious pathways. In particular, each $\Delta\text{-CE}^{s_0 \to s_1}$ in Eq.~\ref{eq:delta-tv-decomp} measures how much the corresponding effect changes when the real-world mechanisms $\{f^{rw}_{X,Z}, f^{rw}_W, f^{rw}_Y\}$ are replaced by the generative model's mechanisms $\{f^{gm}_{X,Z}, f^{gm}_W, f^{gm}_Y\}$. However, as the theorem shows, these quantities can be further decomposed. 
We now examine the interpretation of this decomposition, in the case of direct effects:
\begin{align} \label{eq:delta-xde-decomp}
    \Delta x\text{-DE}^{s_0 \to s_1}_{x_0, x_1}(y\mid x_0) &= \underbrace{\Delta x\text{-DE}^{s_0, s_0, s_0 \to s_1}_{x_0, x_1}(y\mid x_0)}_{T_1} + \underbrace{\Delta x\text{-DE}^{s_0, s_0 \to s_1, s_1}_{x_0, x_1}(y\mid x_0)}_{T_2} \\
    &\quad + \underbrace{\Delta x\text{-DE}^{s_0 \to s_1, s_1, s_1}_{x_0, x_1}(y\mid x_0)}_{T_3}. \nonumber
\end{align}
As mentioned, the LHS above measures the impact of exchanging all real world mechanism with the model ones. The term $T_1$ captures the change in the direct effect resulting from the transition $f_Y^{rw} \to f_Y^{gm}$ (i.e., replacing the mechanism $f_Y^{rw}$ only), while keeping the $f_{X,Z}^{rw}, f_W^{rw}$ for $X, Z, W$ variables.
The term $T_2$ captures the change in direct effect from the transition $f_W^{rw} \to f_W^{gm}$, with $f_{X,Z}^{rw}$ governing $X,Z$ behavior, and $f_Y^{gm}$ governing the $Y$ behavior, respectively. Finally, the term $T_3$ captures the change in direct effect from the transition $f_{X,Z}^{rw} \to f_{X,Z}^{gm}$, with $f_{W}^{gm}, f_Y^{gm}$ governing $W, Y$ behavior. 
Therefore, the decomposition in Eq.~\ref{eq:delta-xde-decomp} allows us to disentangle how the replacement real-world mechanism $f_{X,Z}, f_W, f_Y$ with the model ones affects the direct effect, stepwise. Using the analogous decompositions for indirect and spurious effects, we obtain a full insight into how direct, indirect, and spurious effects are impacted by the replacement of the causal mechanisms. As the following result shows, Thm.~\ref{thm:new-decomp} generalizes previous literature:
\begin{corollary}[Thm.~\ref{thm:new-decomp} in Standard ML Setting] \label{cor:standard-ml}
    Consider the machine learning setting where
    \begin{align}
        f_W^{gm} &= f_W^{rw},\; P^{gm}(U_W)=P^{rw}(U_W) ,\quad f_{X,Z}^{gm} = f_{X,Z}^{rw},\; P^{gm}(U_{X,Z})=P^{rw}(U_{X,Z})
    \end{align}
    that is, the model inherits the mechanisms for $X,Z,W$ from the real world. 
    Then, for each of the contributions $\Delta\text{-CE} \in \{\Delta x\text{-DE}, \Delta x\text{-IE}, \Delta x\text{-SE}\}$, we have that:
    \begin{align} \label{eq:delta-ce-decomp}
        \Delta \text{-CE}^{s_0, s_0 \to s_1, s_1}_{x_0, x_1}(y) &= 0, \quad
        \Delta \text{-CE}^{s_0 \to s_1, s_1, s_1}_{x_0, x_1}(y) = 0,
    \end{align}
    while only the effect $\Delta \text{-CE}^{s_0, s_0, s_0 \to s_1}_{x_0, x_1}(y)$ can be non-zero. $\hfill\square$
\end{corollary}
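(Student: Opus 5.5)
The plan is to reduce each of the two claimed vanishing terms to a difference of potential-outcome expectations, and then show that under the stated mechanism equalities the two expectations in each difference are computed from the same SCM, so they coincide. By the definition of the second-order differences,
\begin{align*}
\Delta \text{-CE}^{s_0, s_0 \to s_1, s_1}_{x_0, x_1}(y) &= \text{CE}^{s_0, s_1, s_1}_{x_0,x_1}(y) - \text{CE}^{s_0, s_0, s_1}_{x_0,x_1}(y),\\
\Delta \text{-CE}^{s_0 \to s_1, s_1, s_1}_{x_0, x_1}(y) &= \text{CE}^{s_1, s_1, s_1}_{x_0,x_1}(y) - \text{CE}^{s_0, s_1, s_1}_{x_0,x_1}(y).
\end{align*}
Each $x$-specific effect (DE, IE, SE) is a difference of two expectations of the generic form $\ex(Y^{s_y}_{x_y, W^{s_w}_{x_w}} \mid X = x_z, S = s_z)$. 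It is therefore enough to prove that such an expectation does not depend on $s_w$, and does not depend on $s_z$, whenever the corresponding mechanisms and noise distributions agree across environments.

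The first step is to write the generic potential outcome explicitly in the joint model $M$, as illustrated in Fig.~\ref{fig:ssfm-po}. With $S_{XZ}=s_z$, the pair $(X,Z)$ is generated by $f^{s_z}_{X,Z}(U_{XZ})$ with $U_{XZ}\sim P^{s_z}(U_{XZ})$. With $S_W = s_w$, the mediator is $W = f^{s_w}_W(x_w, Z, U_W)$ with $U_W\sim P^{s_w}(U_W)$. With $S_Y = s_y$, the outcome is $Y = f^{s_y}_Y(x_y, Z, W, U_Y)$. We then condition on $X=x_z$. This gives an expression of the form
\begin{align*}
\ex(Y^{s_y}_{x_y, W^{s_w}_{x_w}} \mid x_z, s_z) = \int f^{s_y}_Y\big(x_y, z, f^{s_w}_W(x_w,z,u_w), u_y\big)\, dP^{s_y}(u_y)\, dP^{s_w}(u_w)\, dP^{s_z}(z \mid x_z),
\end{align*}
where $P^{s_z}(z\mid x_z)$ is the law induced by $f^{s_z}_{X,Z}$ and $P^{s_z}(U_{XZ})$. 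This factorization uses the structure of the $S$-SFM: the only bidirected edge is $X \bidir Z$, so $U_W$ and $U_Y$ are independent of $U_{XZ}$ and of each other.

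The second step substitutes the hypotheses of the corollary. Since $f^{gm}_W=f^{rw}_W$ and $P^{gm}(U_W)=P^{rw}(U_W)$, the integrand and the $u_w$-measure are identical for $s_w=s_0$ and $s_w=s_1$. Likewise, since $f^{gm}_{X,Z}=f^{rw}_{X,Z}$ with equal noise laws, the conditional $P^{s_z}(z\mid x_z)$ is the same for $s_z\in\{s_0,s_1\}$. Hence every term in $\text{CE}^{s_0,s_1,s_1}$ equals the corresponding term in $\text{CE}^{s_0,s_0,s_1}$ and in $\text{CE}^{s_1,s_1,s_1}$, and both differences vanish for DE, IE and SE alike. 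The remaining term $\Delta\text{-CE}^{s_0,s_0,s_0\to s_1}$ involves replacing $f_Y^{rw}$ by $f_Y^{gm}$, which is not restricted by the hypotheses, so it can be non-zero.

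The main obstacle is making the first step rigorous: we must justify that the nested counterfactual is well defined and factorizes as written when different selection components act on different mechanisms. I would handle this through the formal semantics in which $S=(S_{XZ},S_W,S_Y)$ indexes each mechanism and its exogenous noise separately. Under those semantics, the unit $u$ together with the selection values fully determines the structural functions used, so equal functions and equal noise laws give equal distributions of the counterfactual. Conditioning on $X=x_z$ only involves $U_{XZ}$, so it does not interfere with this argument.
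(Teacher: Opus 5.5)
Your proof is correct and uses the same idea as the paper's. Under the hypotheses, switching $s_w$ or $s_z$ replaces a mechanism and its noise law by identical ones, so every potential-outcome expectation in $\text{CE}^{s_0,s_0,s_1}$, $\text{CE}^{s_0,s_1,s_1}$ and $\text{CE}^{s_1,s_1,s_1}$ is unchanged.

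The paper phrases this as a unit-level coupling: $W^{s_1}_{x_0}(u)=W^{s_0}_{x_0}(u)$ for all $u$, and conditioning on $X=x_0$ is the same event under $S=s_0$ and $S=s_1$. It also writes the argument out only for the direct effect. Your explicit integral over $(u_w,u_y)$ and $P^{s_z}(z\mid x_z)$ covers DE, IE and SE uniformly. It also makes visible the independence $U_W, U_Y \perp U_{XZ}$ that the paper's ``conditional on any event'' step relies on without stating it.
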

The above corollary considers the standard ML setting, in which the ML model inherits the mechanisms for $X,Z,W$ from the real world. This setting, visualized in Fig.~\ref{fig:sfm-ml}, is typically considered in the causal fairness literature \citep{plecko2022causal}. However, Cor.~\ref{cor:standard-ml} shows that in this ML setting, only a single term $\Delta \text{-CE}^{s_0, s_0, s_0 \to s_1}_{x_0, x_1}(y)$ will be non-zero, since only the $f_Y$ mechanism replacement occurs ($f_Y^{rw} \to f_Y^{gm}$). In the general case, needed for analyzing generative AI models, this is no longer the case -- terms in Eq.~\ref{eq:delta-ce-decomp} can be non-zero, reflecting a more involved setting in which there are contributions arising from the replacement of $f_W, f_{X,Z}$ mechanisms.
Therefore, Cor.~\ref{cor:standard-ml} highlights the fact that the new approach introduced in this paper strictly generalizes existing work.

\paragraph{Operational Aspects.} After establishing the theoretical foundations, we now turn to operational aspects, and state a key identification and estimation result (see proof and discussion in App.~\ref{appendix:proofs}): 
\begin{proposition}[ID \& Estimation] \label{prop:id-est}
    Under the assumptions encoded in the $S$-SFM, the potential outcome $\ex [Y^{s_y}_{x_y, W^{s_w}_{x_w}} \mid X = x_z, S = s_z]$ is identified as:
    \begin{align} \label{eq:po-id}
        \sum_{z, w} P^{s_y}(y \mid x_y, z, w)P^{s_w}(w \mid x_w, z) P^{s_z}(z \mid x_z),
    \end{align}
    and can be estimated efficiently via samples from $P^{s_y}(y \mid x_y, z, w)P^{s_w}(y \mid x_w, z) P^{s_z}(z \mid x_z)$, where the superscripts $s_{V_i}$ denote sampling from the environment $S = s_{V_i}$. 
    $\hfill\square$
\end{proposition}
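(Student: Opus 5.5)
The plan is to unfold the nested potential outcome mechanism by mechanism, using the structure of the joint SCM $M$ and the independence of the exogenous noise terms implied by the $S$-SFM. The only bidirected edge is $X \bidir Z$, so $U_W$, $U_Y$ and $U_{XZ}$ are mutually independent in each environment. In the counterfactual world, the selection nodes $(S_{XZ}, S_W, S_Y) = (s_z, s_w, s_y)$ are fixed separately, so we work with the hybrid SCM in which the equations are
\[
X,Z \gets f^{s_z}_{X,Z}(U_{XZ}), \quad W \gets f^{s_w}_W(x_w, Z, U_W), \quad Y \gets f^{s_y}_Y(x_y, Z, W, U_Y),
\]
with $U_{XZ}\sim P^{s_z}$, $U_W \sim P^{s_w}$ and $U_Y \sim P^{s_y}$ independent. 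The event $X = x_z, S=s_z$ only constrains $U_{XZ}$.

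\textbf{Step 1.} Condition on $Z$. By the tower property,
\[
\ex[Y^{s_y}_{x_y, W^{s_w}_{x_w}} \mid x_z, s_z] = \sum_z \ex[Y^{s_y}_{x_y, W^{s_w}_{x_w}} \mid x_z, z, s_z]\, P^{s_z}(z \mid x_z).
\]
Here $P^{s_z}(z\mid x_z)$ is the observational conditional in environment $s_z$, because $(X,Z)$ is generated by $f^{s_z}_{X,Z}$ alone.

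\textbf{Step 2.} Given $Z=z$, the quantity $W^{s_w}_{x_w} = f^{s_w}_W(x_w, z, U_W)$ depends only on $U_W$. Since $U_W \perp U_{XZ}$, its conditional law given $(x_z, z)$ equals the law of $f^{s_w}_W(x_w,z,U_W)$ under $P^{s_w}$. That law coincides with the observational $P^{s_w}(w \mid x_w, z)$, because in environment $s_w$ conditioning on $X=x_w, Z=z$ again only restricts $U_{XZ}$, which is independent of $U_W$. This is exactly the independence statement $W_{x} \perp X \mid Z$ within environment $s_w$. Summing over $w$ gives the middle factor.

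\textbf{Step 3.} Given $Z=z$ and $W^{s_w}_{x_w}=w$, the outcome $Y = f^{s_y}_Y(x_y, z, w, U_Y)$ depends only on $U_Y$. The variable $U_Y$ is independent of $(U_{XZ}, U_W)$, so
\[
\ex[Y \mid \cdot\,] = \ex^{s_y}[Y \mid x_y, z, w].
\]
By the same argument as in Step 2 (no confounding of $Y$ with $(X,Z,W)$ in environment $s_y$), this is the observational conditional. Combining the three steps yields Eq.~\eqref{eq:po-id}. For a binary $Y$ or a general expectation, replace $P^{s_y}(y\mid\cdot)$ by $\ex^{s_y}[Y\mid\cdot]$.

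\textbf{Estimation.} Eq.~\eqref{eq:po-id} is a sequential expectation, which suggests ancestral sampling:
\begin{itemize}
\item draw $z \sim P^{s_z}(z\mid x_z)$, using real data or prompting the model to generate $Z$ given $X=x_z$;
\item draw $w \sim P^{s_w}(w \mid x_w, z)$;
\item draw $y \sim P^{s_y}(y \mid x_y, z, w)$;
\item average $y$ over the draws.
\end{itemize}
This is unbiased and, by the law of large numbers, consistent. Where real-world samples are only available observationally, I would instead use the standard doubly robust / one-step estimator built from the efficient influence function of this mediation-type functional (as in the causal fairness literature). Its influence function has the usual three nested residual terms, one per factor, with density-ratio weights across environments. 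I would cite semiparametric efficiency for that estimator rather than rederive it.

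\textbf{Main obstacle.} The delicate point is Step 2 and Step 3's cross-world independence. The counterfactual mixes noise drawn from different environments ($P^{s_z}$, $P^{s_w}$, $P^{s_y}$), so I must justify that the $S$-SFM treats $U_W, U_Y, U_{XZ}$ as independent across the selection components. This must hold so that the nested counterfactual factorizes. I would make this explicit as the modeling assumption behind the separate selection nodes $S_{XZ}, S_W, S_Y$, in the spirit of the absence of bidirected edges in Fig.~\ref{fig:sfm-genai}.
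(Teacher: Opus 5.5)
Your argument is correct and follows the same route as the paper's proof. You condition on $Z$, unnest over $W^{s_w}_{x_w}=w$, and then use that, given $Z$, the counterfactuals $W^{s_w}_{x_w}$ and $Y^{s_y}_{x_y,w}$ are independent of the conditioning event (and of each other), so consistency turns them into $P^{s_w}(w\mid x_w,z)$ and $\ex^{s_y}[Y\mid x_y,z,w]$. The only difference is in justification: the paper reads these independences off a counterfactual graph and cites counterfactual unnesting, while you derive them directly from the mutual independence of $U_{XZ},U_W,U_Y$ in the hybrid model, which also makes explicit the assumption, left implicit in the paper, that setting $S_W=s_w$ and $S_Y=s_y$ selects that environment's noise law along with its mechanism. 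Your fallback to a one-step doubly robust estimator is what the paper's appendix writes out, though there it is built on the hybrid law $\tilde P$ with propensity ratios rather than cross-environment density ratios.
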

The above proposition assumes access to samples from the distribution $P^{s_y}(y \mid x_y, z, w)P^{s_w}(y \mid x_w, z) P^{s_z}(z \mid x_z)$. Whenever we have $s_z \leq s_w \leq s_y$, such samples can be generated as follows. 
Assume we have access to the real world dataset $\mathcal{D}^{s_0}$.
If $s_z = s_w= 0, s_y =1$, we take the sample values $(X_i, Z_i, W_i)$ from $\mathcal{D}^{s_0}$, while we generate the value for $Y_i^{gm}$ from the conditional distribution $P^{s_1}(y \mid x, z, w)$ of the model.
If $s_z = 0, s_w=s_y =1$, we take the sample values $(X_i, Z_i)$ from $\mathcal{D}^{s_0}$, while we generate the values for $W_i^{gm}, Y_i^{gm}$ from the conditional distribution $P^{s_1}(y, w \mid x, z)$ of the model. If $s_z = s_w = s_y = 1$, all values are generated from $P^{s_1}(x,z,w,y)$. 

As noted earlier, generative models may operate at a different level of granularity, generating data in lower-level modalities. In this case, we assume access to a generator $\Gamma$, which may take arbitrary inputs of the form $V \mid V' = v'$, and produce outputs that imply the values of covariates $V$ conditional on the values $V' = v'$. This output is then passed into an annotator $A$, which extracts the values of $V$ from the lower-level data (full details are described in App.~\ref{appendix:elicit}). 
To ground the discussion, we provide the instantiation of this workflow for language models:
\begin{example}[Bias Quantification in LLMs] \label{ex:nsduh-marijuana}
    Consider quantification of racial bias in substance abuse for large language models. We wish to study how the protected attribute $X$ (race) affects the outcome $Y$ (smoking marijuana monthly) via direct, indirect (mediators education $W_1$, income $W_2$), and spurious (confounders sex $Z_1$, age $Z_2$) mechanism, both in the real world and the language model. For real world data $\mathcal{D}^{s_0}$, we use the NSDUH dataset \citep{nsduh2023}.

    When generating the data from the distribution $P^{s_0}(x, z)P^{s_1}(y, w \mid x,z)$, we take the values of $(X_i, Z_i)$ from the dataset $\mathcal{D}^{s_0}$, and use the following prompt template that is passed to an LLM 
    \begin{align} \nonumber
        \pi(X, Z_1, Z_2) =& \text{``For a \{$X$\} person, of \{$Z_1$\} sex, age \{$Z_2$\}, write a story mentioning their}\\
        &\text{education, income, and whether they smoke marijuana every month.''} \nonumber
    \end{align}
    For specific values $X=$White, $Z_1$=female, $Z_2=24$, the LLM is passed the following prompt:
    \vspace{-0.05in}
    \begin{verbatim}
    For a White person, of female sex, age 24, write a story mentioning 
    their education, income, and whether they smoke marijuana every month.
    \end{verbatim}
    \vspace{-0.2in}
    The LLM queried with such prompts can be viewed as the generator $\Gamma$, taking queries of the form $Y, W \mid X=x, Z=z$ as inputs. $\Gamma$ may generate a response as follows:
    \vspace{-0.05in}
\begin{verbatim}
    Sophia is a 24-year-old White woman with a bachelor’s degree in
    environmental science. She earns $30,000–$39,999 a year from her
    part-time job at a local outdoor-gear shop. She smokes marijuana
    monthly with friends, saying it’s her way to relax.
\end{verbatim}
    Finally, responses from $\Gamma$ are passed to an annotator $A$, another LLM, which then determines the values of $W, Y$. In this case $W_1=\text{Bachelor's}, W_2=\text{\$30,000-\$39,999}, Y=1$. 
    The sample $(X_i, Z_i, W_i^{gm}, Y_i^{gm})$ is added to 
    the dataset 
    $\mathcal{D}^{s_0, s_1, s_1}$, which is drawn from the target distribution $P^{s_0}(x, z)P^{s_1}(y, w\mid x,z)$.
\end{example}
The example illustrates how to generate the dataset $\mathcal{D}^{s_0, s_1, s_1}$. Generally, we focus on four datasets, $\mathcal{D}^{s_0}$ (real world data), $\mathcal{D}^{s_0, s_0, s_1}$ where the $f_{Y}$ mechanism is taken from the generative model, $\mathcal{D}^{s_0, s_1, s_1}$ where $f_W, f_Y$ are taken from the generative model, and $\mathcal{D}^{s_1}$, where all data is fully sampled from the generative model. 
Finally, for each of the four datasets, the result from Prop.~\ref{prop:id-est} can be used to estimate the required causal effects of interest (see App.~\ref{appendix:proofs} for details).

We also remark that other datasets $\mathcal{D}^{s_z,s_w,s_y}$ could be considered, conceptually. For instance, $\mathcal{D}^{s_1,s_0, s_0}$ would correspond to data where $X, Z$ variables are drawn from the generative model, and then passed into real-world mechanisms for $W, Y$. However, in such cases the conditional distribution $P(Y, W \mid X, Z)$ would need to be learned using real data, for a possibly multi-dimensional $W$, which may be a difficult exercise with a limited amount of data. However, when $s_z \leq s_w \leq s_y$, the dataset $\mathcal{D}^{s_z,s_w,s_y}$ can be generated without fitting complex conditionals on real data. 
\section{Experiments}\label{sec:experiments}
\paragraph{Datasets.} We use three publicly available datasets, each providing nationally representative individual-level data with sample weights:
(i) \textbf{NSDUH 2023}~\citep{nsduh2023} from SAMHSA, where we examine racial bias in monthly marijuana use among White ($X = x_0$) vs. minority ($X=x_1$ for African-American or Hispanic) individuals ($Y=1$ for smoking marijuana);
(ii) \textbf{BRFSS 2023}~\citep{brfss2023} from the CDC, where we examine racial bias in diabetes diagnosis among the same groups ($Y=1$ for diabetes);
(iii) \textbf{ACS 2023}~\citep{acs2023} from the US Census Bureau, where we examine sex bias ($X = x_0$ male, $X = x_1$ female) in salary among employed adults ($Y=1$ for income below \$50k/year).
For each dataset, the protected attribute $X$, confounders $Z$, mediators $W$, and outcome $Y$ are specified by the SFMs in Fig.~\ref{fig:sfms}. We sample $n = 8{,}192$ rows per dataset using survey weights, and use the same sampled rows as input across all models to ensure comparability.

\begin{figure}[t]
\centering
\begin{subfigure}{0.3\textwidth}
\centering
\resizebox{\linewidth}{!}{%
\begin{tikzpicture}
	 	[>=stealth, rv/.style={thick}, rvc/.style={triangle, draw, thick, minimum size=8mm}, node distance=7mm]
	 	\pgfsetarrows{latex-latex};
	 	\large
        \begin{scope}
	 	\node[rv] (c) at (1.8,1.25) {sex};
        \node[rv] (c) at (2.8,1.22) {age};
	 	\node[rv] (a) at (0,0) {race};
	 	\node[rv, align=center] (m) at (1.4,-1.25) {edu};
	 	\node[rv, align=center] (l) at (3,-1.25) {income};	 	
        \node[rv, align=center] (y) at (4.75,0) {marijuana\\ use};
	 	
	 	\node (Zset) [draw,rectangle,minimum width=2cm,minimum height=0.6cm] at (2.3,1.25) {};
	 	\node (Wset) [draw,rectangle,minimum width=2.8cm,minimum height=0.6cm] at (2.3,-1.25) {};
	 	
	 	\path[->] (a) edge[bend left = 0] (y);
	 	\path[->] (a) edge[bend left = -20] (Wset);
	 	\path[->] (Zset) edge[bend left = 0] (Wset);
	 	\path[->] (Wset) edge[bend left = -20] (y);
	 	\path[->] (Zset) edge[bend left = 20] (y);
	 	
	 	\path[<->,dashed] (a) edge[bend left = 20](Zset);
	 	\end{scope}
\end{tikzpicture}
}
\caption{NSDUH.}
\label{fig:sfm-nsduh}
\end{subfigure}
\hfill
\begin{subfigure}{0.34\textwidth}
\centering
\resizebox{\linewidth}{!}{%
\begin{tikzpicture}
    [>=stealth, rv/.style={thick}, node distance=7mm]
    \pgfsetarrows{latex-latex};
    \large
    \begin{scope}
    \node[rv] at (2.0,1.25) {sex};
    \node[rv] at (3.1,1.22) {age};
    \node[rv] (a) at (-0.2,0) {race};
    \node[rv] at (0.7,-1.25) {edu};
    \node[rv] at (1.75,-1.25) {income};
    \node[rv] at (2.85,-1.275) {BMI};
    \node[rv] at (4.05,-1.25) {exercise};
    \node[rv] (y) at (5.5,0) {diabetes};
    \node (Zset) [draw,rectangle,minimum width=2cm,minimum height=0.6cm] at (2.55,1.25) {};
    \node (Wset) [draw,rectangle,minimum width=4.6cm,minimum height=0.6cm] at (2.55,-1.25) {};
    \path[->] (a) edge (y);
    \path[->] (a) edge[bend left = -20] (Wset);
    \path[->] (Zset) edge (Wset);
    \path[->] (Wset) edge[bend left = -20] (y);
    \path[->] (Zset) edge[bend left = 20] (y);
    \path[<->,dashed] (a) edge[bend left = 20] (Zset);
    \end{scope}
\end{tikzpicture}
}
\caption{BRFSS.}
\label{fig:sfm-brfss}
\end{subfigure}
\hfill
\begin{subfigure}{0.32\textwidth}
\centering
\resizebox{\linewidth}{!}{%
\begin{tikzpicture}
    [>=stealth, rv/.style={thick}, node distance=7mm]
    \pgfsetarrows{latex-latex};
    \large
    \begin{scope}
    \node[rv] at (1.45,1.22) {age};
    \node[rv] at (2.4,1.25) {race};
    \node[rv] at (3.55,1.25) {region};
    \node[rv] (a) at (0,0) {sex};
    \node[rv] at (1.2,-1.25) {edu};
    \node[rv] at (2.2,-1.25) {hours};
    \node[rv] at (3.6,-1.29) {employer};
    \node[rv] (y) at (5.6,0) {salary};
    \node (Zset) [draw,rectangle,minimum width=3.1cm,minimum height=0.6cm] at (2.6,1.25) {};
    \node (Wset) [draw,rectangle,minimum width=3.9cm,minimum height=0.6cm] at (2.6,-1.25) {};
    \path[->] (a) edge (y);
    \path[->] (a) edge[bend left = -20] (Wset);
    \path[->] (Zset) edge (Wset);
    \path[->] (Wset) edge[bend left = -20] (y);
    \path[->] (Zset) edge[bend left = 20] (y);
    \path[<->,dashed] (a) edge[bend left = 20] (Zset);
    \end{scope}
\end{tikzpicture}
}
\caption{Census Income.}
\label{fig:sfm-census}
\end{subfigure}
\caption{Standard Fairness Models for the three datasets.}
\vspace{-0.2in}
\label{fig:sfms}
\end{figure}

\paragraph{Models.} We evaluate 10 instruction-tuned open-weight models spanning 6 families and varying parameter count: Llama 3.1 8B and Llama 3.3 70B~\citep{grattafiori2024llama}, Qwen 3.5 9B and Qwen 3.5 27B~\citep{qwen35}, Gemma 3 4B and Gemma 3 27B~\citep{team2024gemma}, DeepSeek 7B~\citep{bi2024deepseek} and DeepSeek-R1 32B (Qwen-distilled)~\citep{guo2025deepseek}, Ministral 3 8B~\citep{ministral3}, and Phi-4~\citep{abdin2024phi}. All generation is performed at temperature $= 1$ and $\text{top-}p = 1$ to mimic the model's real world deployment. 
Following Sec.~\ref{sec:cfgai}, we construct three intermediate datasets per model, labeled $\mathcal{D}^{s_0,s_0,s_1}$, $\mathcal{D}^{s_0,s_1,s_1}$, and $\mathcal{D}^{s_1}$, corresponding to progressive replacement of mechanisms $f_Y$, $f_W$, and $f_{X,Z}$. 
During prompting, models are provided with the dataset year, country, and cohort definition (e.g., all adults).
App.~\ref{appendix:elicit} provides details of dataset generation (including annotator validation in App.\ref{appendix:annotator-validation}), while App.~\ref{appendix:empirical} extends empirical analyses.

\subsection{Global Analysis}\label{sec:global}

Across our 10 models and 3 datasets, the methodology of Sec.~\ref{sec:cfgai} produces, for each model-dataset pair, three causal effects ($x$-DE, $x$-IE, $x$-SE) at three intermediate stages of mechanism replacement (replacing $f_Y$, then $f_W$, then $f_{X,Z}$), yielding the models 9-dimensional \emph{bias signature} $\mathcal{B}(M, \mathcal{D})$:
\begin{align}
    (\text{DE}^{s_0,s_0, s_1}, \text{IE}^{s_0,s_0, s_1}, \text{SE}^{s_0,s_0, s_1},
    \text{DE}^{s_0,s_1, s_1}, \text{IE}^{s_0,s_1, s_1}, \text{SE}^{s_0,s_1, s_1},
    \text{DE}^{s_1}, \text{IE}^{s_1}, \text{SE}^{s_1}),
\end{align}
per model-dataset pair, which we then compare against the corresponding real-world effects. For ease of interpretation, we report $\text{IE} =-x\text{-IE}_{x_1, x_0}(y \mid x_0)$ and $\text{SE} =-x\text{-SE}_{x_1, x_0}(y)$, as these consider a reverse transition compared to $\text{DE}=x\text{-DE}_{x_0,x_1}(y\mid x_0)$.

\paragraph{Disadvantage and Advantage.}
For a given causal pathway, we say that the real-world effect places the protected group at a \emph{disadvantage} when its sign indicates a worse outcome for $X = x_1$ relative to $X = x_0$ (e.g., higher probability of marijuana use, higher probability of diabetes, or lower salary).
Formally, this happens if $\text{CE} > 0$.
Conversely, the effect indicates an \emph{advantage} when its sign points in the opposite direction. Most pathways across our datasets show protected group disadvantage; the exceptions are NSDUH direct effect (minorities use less marijuana) and BRFSS spurious effect (minorities in the cohort are younger, and lower age reduces diabetes risk).

Given a causal pathway with a real-world effect $\text{CE}^{s_0}$, we classify how the model's corresponding effect $\text{CE}^{s_z, s_w, s_y}$ (of any intermediate stage) modifies the real-world bias. 
The model \emph{amplifies} the bias when $|\text{CE}^{s_1}| > |\text{CE}^{s_0}|$ with matching sign, \emph{dampens} it when $|\text{CE}^{s_1}| < |\text{CE}^{s_0}|$ with matching sign, and \emph{reverses} it when the sign flips. Each of the $9$ effects per model-dataset pair therefore receives one of these three labels, and we summarize across all $3 \times 9 = 27$ effects per model.
Tab.~\ref{tab:stereotype-all} reports, for each model, the proportion of disadvantage-direction effects falling into each category.
\emph{Reversal}, where the model induces an effect of opposite sign to reality, occurs in 7--44\% of disadvantage-direction effects depending on the model. \emph{Amplification} occurs for 15--44\% of the effects. No model dampens consistently; even the largest model in our suite (Llama 3.3 70B) modifies the majority of effects substantially. 
Regarding advantage directions: nearly all models dampen or reverse the BRFSS spurious advantage, while models exhibit mixed behavior with respect to NSDUH direct advantage (see Gemma 3 27B case study below).

\begin{figure}[t]
\centering
\begin{minipage}{0.45\textwidth}
\centering
\centering
\begin{tabular}{l ccc}
\toprule
\textbf{Model} & Amp. & Damp. & Rev. \\
\midrule
DeepSeek 7B & \cellcolor{orange!10}15\% & \cellcolor{orange!65}56\% & \cellcolor{orange!35}30\% \\
DeepSeek-R1 32B & \cellcolor{orange!35}30\% & \cellcolor{orange!65}59\% & \cellcolor{orange!10}11\% \\
Gemma 3 27B & \cellcolor{orange!35}44\% & \cellcolor{orange!35}30\% & \cellcolor{orange!35}26\% \\
Gemma 3 4B & \cellcolor{orange!35}41\% & \cellcolor{orange!35}33\% & \cellcolor{orange!35}26\% \\
Llama 3.3 70B & \cellcolor{orange!35}44\% & \cellcolor{orange!35}44\% & \cellcolor{orange!10}11\% \\
Llama 3 8B & \cellcolor{orange!35}41\% & \cellcolor{orange!35}44\% & \cellcolor{orange!10}15\% \\
Ministral 3 8B & \cellcolor{orange!35}37\% & \cellcolor{orange!65}56\% & \cellcolor{orange!10}7\% \\
Phi-4 & \cellcolor{orange!10}22\% & \cellcolor{orange!65}56\% & \cellcolor{orange!10}22\% \\
Qwen 3.5 27B & \cellcolor{orange!10}15\% & \cellcolor{orange!35}41\% & \cellcolor{orange!35}44\% \\
Qwen 3.5 9B & \cellcolor{orange!35}44\% & \cellcolor{orange!35}48\% & \cellcolor{orange!10}7\% \\
\bottomrule
\end{tabular}
\captionof{table}{Stereotype analysis summary (all datasets).}
\label{tab:stereotype-all}

\end{minipage}
\hfill
\begin{minipage}{0.5\textwidth}
\centering
\includegraphics[width=\linewidth]{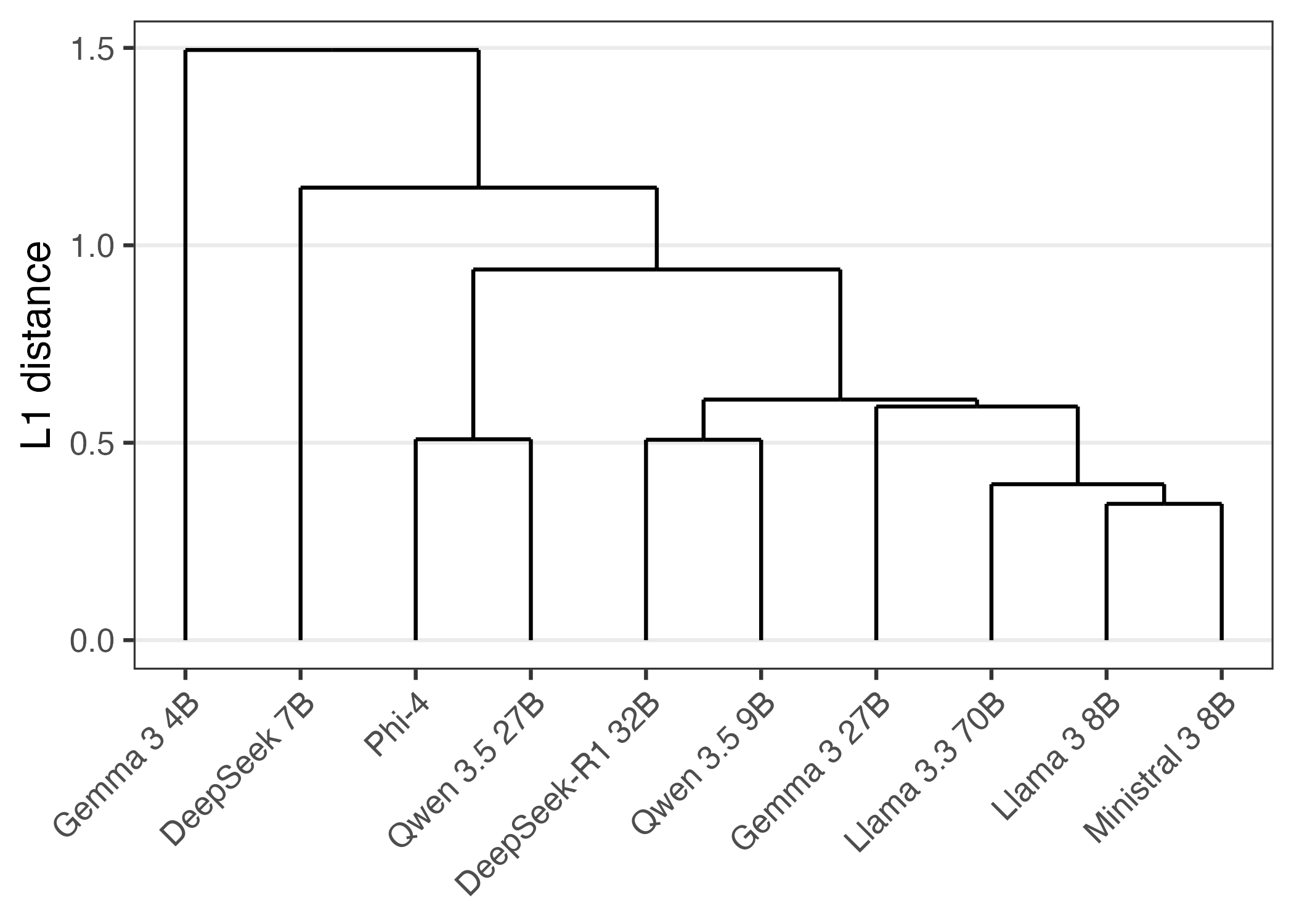}
\vspace{-0.3in}
\captionof{figure}{Hierarchical clustering of model bias signatures ($L_1$ distance, Ward linkage).}
\label{fig:dendrogram}
\end{minipage}
\end{figure}


\paragraph{Bias Similarity and Model Families.}
Concatenating $\mathcal{B}(M, \mathcal{D})$ across datasets defines a 27-dimensional bias signature for each model. To assess structural similarities in encoded causal beliefs, we compute pairwise $L_1$ distances and visualize the results via hierarchical clustering (Fig.~\ref{fig:dendrogram}, Ward linkage). The resulting dendrogram suggests that bias similarity is not highest within model families: the closest pair overall are Llama 3 8B and Ministral3 8B ($L_1 = 0.35$), a pairing among distinct developer families and parameter scales. 
Further, while the Llama 3 siblings sit at $L_1 = 0.39$, the Qwen 3.5 pair is at $0.62$ and the Gemma 3 pair at $1.22$ -- both farther apart than many cross-family pairs. These mixed groupings motivate a formal test of whether family membership reliably predicts bias similarity. Using a permutation test ($N = 5{,}000$) on the mean within-family $L_1$ distance, we observe a mean distance of $0.80$ against a null mean of $0.76$ ($p = 0.62$) -- not offering evidence that siblings are closer than chance (however, we remark that the statistical power of our test cannot be computed).
We close with two case studies showcasing granular insights enabled by our framework.

\paragraph{Gemma 3 27B Thinks Minorities Use More Marijuana.}
In reality, NSDUH data shows that minority individuals use marijuana at a \emph{lower} rate than the majority along the direct pathway, controlling for sex, age, education, and income: $\text{DE}^{s_0} = -3.8\% \pm 1.8\%$ (all effects are reported with $\pm 1.96\sigma$, the 95\% confidence interval, constructed assuming asymptotic normality; see App.~\ref{appendix:proofs}). The indirect and spurious paths point in the opposite direction with smaller magnitudes ($\text{IE}^{s_0} = 0.2\% \pm 0.7\%$, $\text{SE}^{s_0} = 1.8\% \pm 1.0\%$). Replacing both the outcome ($f_Y$) and mediator ($f_W$) mechanisms with Gemma 3 27B's (dataset $\mathcal{D}^{s_0, s_1, s_1}$) reverses the sign of the direct effect to a significant $+4.3\% \pm 2.9\%$ and amplifies the spurious effect to $+6.7\% \pm 3.2\%$, while the indirect path remains negligible ($-0.4\% \pm 1.8\%$). The model encodes a stereotyped belief that minority individuals use marijuana at \emph{higher} rates than the majority, even after controlling for socioeconomic factors, which directly contradicts the empirical pattern in the data. Such bias may propagate into downstream tasks (e.g., risk scoring, policy recommendations) where stereotypes are damaging.

\paragraph{Qwen 3.5 27B Shows Major Sign Reversal for BRFSS.}
In reality, minority respondents in BRFSS face elevated risk of diabetes along the direct pathway ($\text{DE}^{s_0} = 5.7\% \pm 2.0\%$) and along the indirect pathway through differences in education, income, BMI, and exercise ($\text{IE}^{s_0} = 2.7\% \pm 1.3\%$), but a reduced spurious risk due to the cohort's younger age ($\text{SE}^{s_0} = -5.2\% \pm 1.3\%$). Under full mechanism replacement (dataset $\mathcal{D}^{s_1}$), Qwen 3.5 27B reverses the sign of both the direct and indirect effects, and dampens the spurious effect toward zero: $\text{DE}^{s_1} = -1.8\% \pm 1.6\%$, $\text{IE}^{s_1} = -6.3\% \pm 1.5\%$, $\text{SE}^{s_1} = -1.6\% \pm 3.3\%$. The model believes the majority population is at increased risk of diabetes via direct and indirect mechanisms, contrary to the epidemiological pattern, while no longer encoding minority cohort age advantage. Both the $\text{DE}$ and $\text{IE}$ reversals are statistically significant; the dampening of $\text{SE}$ leaves it not significantly different from zero.
\paragraph{Limitations and Future Work.}
First, generative models are capable of making inferences in the opposite causal order (e.g., inferring sex from salary information), a setting our framework does not cover explicitly, and which we leave for future work.
Second, our framework assumes the ability to query generative models for arbitrary conditionals, specifying any value of the observed covariates $V$. While reasonable in language and multimodal settings, it may not hold in others; in vision models, for instance, it may be difficult to convey a person's education level through an image. Such cases may require further methodological advances.
Third, applying our framework to a new domain requires specifying the $S$-SFM, which requires human elicitation of causal assumptions. Relaxing this requirement is left for future work. Finally, we restricted our study to open-weight models; extension to closed models \citep{team2023gemini, achiam2023gpt, bai2022constitutional} is methodologically trivial but cost-prohibitive at scale.

\newpage
\ifnum\treportflag=0
\section*{Broader Impact Statement}

This work analyzes fairness in generative AI systems from a causal lens, and we not forsee obvious risks arising from this work. 
In fact, the work is intended to contribute positively to the study and mitigation of fairness concerns. 
The developed methodology represents an auditing tool: it enables practitioners, regulators, and model developers to quantify, in a causally interpretable manner, how the beliefs encoded by a generative model differ from the relationships observed in real-world data, and to trace those differences to specific mechanisms.

However, we emphasize that our framework diagnoses bias but does not offer ways to correct for such bias. Further, the conclusions based on our framework depend on the underlying $S$-SFM, which encodes causal assumptions about the underlying domain. 
Different reasonable choices of graph structure may yield different conclusions, and our results should therefore be interpreted in light of the assumptions they rest on (rather than being treated as definitive). 

All datasets used in this paper (NSDUH, BRFSS, ACS) are publicly available, de-identified survey data released by US government agencies (SAMHSA, CDC, U.S.\ Census Bureau, respectively).
\fi

\bibliographystyle{abbrvnat}
\bibliography{refs}


\newpage
\appendix
\section*{\centering\Large Supplementary Material for \textit{\ttl{}}}
The source code for reproducing the results can be found in the anonymized code repository \url{https://anonymous.4open.science/r/fgai-48B7}. The repository includes a README file explaining the setup steps.
All experiments were run on four NVIDIA A100 GPUs with 40GB VRAM each. Inference was performed using vLLM \citep{kwon2023efficient}, and the total computation time required to generate the results across all models and datasets is under 72 hours.

\section{Proofs} \label{appendix:proofs}
In this appendix, we provide the proofs for the key results, including Thm.~\ref{thm:new-decomp}, Cor.~\ref{cor:standard-ml}, and Prop.~\ref{prop:id-est}. After establishing the identification of the $S$-modified potential outcomes, we also discuss the estimation of such quantities.
Recall that in the $S$-SFM (Fig.~\ref{fig:sfm-genai}), the node $S$ is a root node with no parents, pointing to each of the covariates $X, Z, W, Y$. This structural property allows us to freely exchange conditioning on $S$ and intervention on $S$: for any subset of covariates, conditioning on $\{S = s\}$ and intervening to set $S = s$ induce the same distribution. Using this, we define the following $L_1$ shorthand notation:
\begin{align}
    P^{s_z}(z \mid x_z) &\overset{\Delta}{=} P(z \mid x_z, S = s_z), \\
    P^{s_w}(w \mid x_w, z) &\overset{\Delta}{=} P(w \mid x_w, z, S = s_w), \\
    P^{s_y}(y \mid x_y, z, w) &\overset{\Delta}{=} P(y \mid x_y, z, w, S = s_y),
\end{align}
with analogous notation for expectations. With this convention, the identification expression in Eq.~\ref{eq:po-id} is a function of observable $L_1$ conditionals across the real-world and generative-model environments.

\begin{proof}[Proof of Thm.~\ref{thm:new-decomp}]
    Note that we have for $s \in \{ s_0, s_1 \}$
    \begin{align}
        \text{TV}_{x_0, x_1}^s(y) &= \ex^{s}[Y \mid X = x_1] - \ex^{s}[Y \mid X = x_0] \\
        &= \ex^{s}[Y_{x_1} \mid X = x_1] - \ex^{s}[Y_{x_0} \mid X = x_0] \\
        &= \ex^{s}[Y_{x_1} \mid X = x_1] - \ex^{s}[Y_{x_1} \mid X = x_0] \\
        &\quad + \ex^{s}[Y_{x_1} \mid X = x_0] - \ex^{s}[Y_{x_1, W_{x_0}} \mid X = x_0] \\
        &\quad + \ex^{s}[Y_{x_1, W_{x_0}} \mid X = x_1] - \ex^{s}[Y_{x_0} \mid X = x_0] \\
        &= - x\text{-SE}^{s}_{x_1, x_0}(y) - x\text{-IE}^{s}_{x_1, x_0}(y\mid x_0) + x\text{-DE}^{s}_{x_0, x_1}(y \mid x_0), \label{eq:tv-decomp}
    \end{align}
    where the first step follows from the consistency axiom, second by addition-subtraction, and the third by recognizing the appropriate causal effects. Using this Eq.~\ref{eq:tv-decomp}, we can re-write $\Delta \text{TV}_{x_0, x_1}^{s_0 \to s_1}(y)$ as
    \begin{align}
        \Delta \text{TV}_{x_0, x_1}^{s_0 \to s_1}(y) &\overset{\Delta}{=} \text{TV}^{gm}_{x_0, x_1} - \text{TV}^{rw}_{x_0, x_1} = \text{TV}^{s_1}_{x_0, x_1} - \text{TV}^{s_0}_{x_0, x_1} \\
        &= x\text{-DE}^{s_1}_{x_0, x_1}(y\mid x_0) - x\text{-IE}^{s_1}_{x_1, x_0}(y\mid x_0) - x\text{-SE}^{s_1}_{x_1, x_0}(y) \label{eq:tv-decomp-gm} \\
        &\quad - (x\text{-DE}^{s_0}_{x_0, x_1}(y\mid x_0) - x\text{-IE}^{s_0}_{x_1, x_0}(y\mid x_0) - x\text{-SE}^{s_0}_{x_1, x_0}(y)) \nonumber \\
        &= x\text{-DE}^{s_1}_{x_0, x_1}(y\mid x_0) - x\text{-DE}^{s_0}_{x_0, x_1}(y\mid x_0) \label{eq:x-ce-group} \\
        &\quad - (x\text{-IE}^{s_1}_{x_1, x_0}(y\mid x_0) - x\text{-rw}^{s_0}_{x_1, x_0}(y\mid x_0)) \nonumber \\
        &\quad - (x\text{-SE}^{s_1}_{x_1, x_0}(y) - x\text{-SE}^{s_0}_{x_1, x_0}(y)) \nonumber \\
        &= \Delta x\text{-DE}^{s_0 \to s_1}_{x_0, x_1}(y\mid x_0) - 
        \Delta x\text{-IE}^{s_0 \to s_1}_{x_1, x_0}(y\mid x_0) 
        - \Delta x\text{-SE}^{s_0 \to s_1}_{x_1, x_0}(y) \label{eq:delta-tv-target}
    \end{align}
    with Eq.~\ref{eq:tv-decomp-gm} following by an application of Eq.~\ref{eq:tv-decomp}, Eq.~\ref{eq:x-ce-group} by rearrangement, and Eq.~\ref{eq:delta-tv-target} by definition of $\Delta x\text{-}CE$ effects. Now, note that
    \begin{align}
        \Delta x\text{-DE}^{s_0 \to s_1}_{x_0, x_1}(y\mid x_0) &\overset{\Delta}{=} x\text{-DE}^{s_1, s_1, s_1}_{x_0, x_1}(y\mid x_0) - x\text{-DE}^{s_0, s_0, s_0}_{x_0, x_1}(y\mid x_0) \label{eq:delta-x-de-def} \\
        &= x\text{-DE}^{s_1, s_1, s_1}_{x_0, x_1}(y\mid x_0) - x\text{-DE}^{s_0, s_1, s_1}_{x_0, x_1}(y\mid x_0)  \label{eq:delta-rearrange}\\
        &\quad + x\text{-DE}^{s_0, s_1, s_1}_{x_0, x_1}(y\mid x_0) - x\text{-DE}^{s_0, s_0, s_1}_{x_0, x_1}(y\mid x_0) \nonumber  \\
        &\quad + x\text{-DE}^{s_0, s_0, s_1}_{x_0, x_1}(y\mid x_0) - x\text{-DE}^{s_0, s_0, s_0}_{x_0, x_1}(y\mid x_0) \nonumber \\
        &= \Delta x\text{-DE}^{s_0 \to s_1, s_1, s_1}_{x_0, x_1}(y \mid x_0) + \Delta x \text{-DE}^{s_0, s_0 \to s_1, s_1}_{x_0, x_1}(y) \label{eq:delta-decomp-target} \\
    &\quad +\Delta x\text{-DE}^{s_0, s_0, s_0 \to s_1}_{x_0, x_1}, \nonumber
    \end{align}
    where Eq.~\ref{eq:delta-x-de-def} follows by definition, Eq.~\ref{eq:delta-rearrange} by addition-subtraction, and Eq.~\ref{eq:delta-decomp-target} by recognizing the corresponding $\Delta x\text{-DE}$ effects. The proof for $\Delta x\text{-IE}$ and $\Delta x\text{-SE}$ effects follows analogously.
\end{proof}

\begin{proof}[Proof of Cor.~\ref{cor:standard-ml}]
    Denote by $ml$ the generative model of the standard machine learning setting, for which $f_{X,Z}^{ml} = f_{X,Z}^{rw},\;f_W^{ml} = f_W^{rw},\; P^{ml}(U_{XZ}) = P^{rw}(U_{XZ}), \; P^{ml}(U_{W}) = P^{rw}(U_{W})$ (in words, mechanisms and noise distributions for $X, Z$, and $W$ are inherited from the real world). Then, focusing on the direct effect, we note that 
    \begin{align}
        \Delta x\text{-DE}^{s_0 \to s_1, s_1, s_1}_{x_0, x_1}(y \mid x_0) &= x\text{-DE}^{s_1, s_1, s_1}_{x_0, x_1}(y\mid x_0) - x\text{-DE}^{s_0, s_1, s_1}_{x_0, x_1}(y\mid x_0) \\
        &= \ex [Y^{s_1}_{x_1, W_{x_0}^{s_1}} - Y^{s_1}_{x_0, W_{x_0}^{s_1}} \mid X = x_0, S=s_1] \\
        &\quad - \ex [Y^{s_1}_{x_1, W_{x_0}^{s_1}} - Y^{s_1}_{x_0, W_{x_0}^{s_1}} \mid X=x_0, S=s_0].
    \end{align}
    Now, since $f_{X,Z}^{ml} = f_{X,Z}^{rw}$ and $P^{ml}(U_{XZ}) = P^{rw}(U_{XZ})$, conditioning on $X = x_0, S={s_1}$ and $X = x_0, S = {s_0}$ corresponds to conditioning on the same event; and thus 
    \begin{align}
        \ex [Y^{s_1}_{x_1, W_{x_0}^{s_1}} - Y^{s_1}_{x_0, W_{x_0}^{s_1}} \mid X= x_0, S ={s_1} ] = \ex [Y^{s_1}_{x_1, W_{x_0}^{s_1}} - Y^{s_1}_{x_0, W_{x_0}^{s_1}} \mid X = x_0, S={s_0}],
    \end{align}
    from which it follows that $\Delta x\text{-DE}^{s_0 \to s_1, s_1, s_1}_{x_0, x_1}(y \mid x_0) = 0$. Now, note further that 
    \begin{align}
        \Delta x\text{-DE}^{s_0, s_0 \to s_1, s_1}_{x_0, x_1}(y \mid x_0) &= x\text{-DE}^{s_0, s_1, s_1}_{x_0, x_1}(y\mid x_0) - x\text{-DE}^{s_0, s_0, s_1}_{x_0, x_1}(y\mid x_0) \\
        &= \ex [Y^{s_1}_{x_1, W_{x_0}^{s_1}} - Y^{s_1}_{x_0, W_{x_0}^{s_1}} \mid X = x_0, S = {s_0}] \\
        &\quad - \ex [Y^{s_1}_{x_1, W_{x_0}^{s_0}} - Y^{s_1}_{x_0, W_{x_0}^{s_0}} \mid X= x_0, S={s_0}]
    \end{align}
    Now, since $f_{W}^{ml} = f_{W}^{rw}$, we know that $W_{x_0}^{s_1}(u) = W_{x_0}^{s_0}(u)$ for all $u$, and since $P^{ml}(U_{W}) = P^{rw}(U_{W})$, we have that $W_{x_0}^{s_1} \overset{d}{=} W_{x_0}^{s_0}$, from which it follows that $Y^{s_1}_{x, W_{x_0}^{s_1}} \overset{d}{=} Y^{s_1}_{x, W_{x_0}^{s_1}}$ conditional on any event, for any value $x \in \{x_0, x_1\}$. It thus follows that
    \begin{align}
        \ex [Y^{s_1}_{x_1, W_{x_0}^{s_1}} - Y^{s_1}_{x_0, W_{x_0}^{s_1}} \mid X = x_0, S = {s_0} ] =\ex [Y^{s_1}_{x_1, W_{x_0}^{s_0}} - Y^{s_1}_{x_0, W_{x_0}^{s_0}} \mid X = x_0, S = {s_0}], 
    \end{align}
    from which it follows that $\Delta x\text{-DE}^{s_0, s_0 \to s_1, s_1}_{x_0, x_1}(y \mid x_0) = 0$. 
\end{proof}

\begin{figure}
    \centering
    \scalebox{1}{\begin{tikzpicture}
 [>=stealth, rv/.style={thick}, rvc/.style={triangle, draw, thick, minimum size=4mm}, node distance=10mm]
 \pgfsetarrows{latex-latex};

 \node[rv] (z) at (0,1) {$Z$};

 \node[rv] (xw) at (0.6,-2) {$x_w$};
 \node[rv] (xz) at (-1.5,0.5) {$X=x_z$};
 \node[rv] (xy) at (-0.8,-0.1) {$x_y$};

 \node[rv] (sz) at (-1.5, 1.75) {\fbox{$S=s_z$}};
 \node[rv] (sw) at (-0.4, -2) {\fbox{$s_w$}};
 \node[rv] (sy) at (1.5, 1) {\fbox{$s_y$}};

 \node[rv] (wnat) at (-1.3,-0.9) {$W$};

 \node[rv] (w) at (0,-1) {$W_{x_w}^{s_w}$};

 \node[rv] (wfix) at (1.1,-1) {$w$};
 \node[rv] (y) at (1.5,0) {$Y_{x_y, w}^{s_y}$};

 \draw[->] (xw) -- (w);
 \draw[->] (xy) -- (y);
 \path[<->] (xz) edge[bend left=30, dashed] (z);
  \path[<->] (w) edge[bend left=40, dashed] (wnat);

 \draw[->] (z) -- (y);
 \draw[->] (z) -- (w);
 \draw[->] (z) edge[bend left=20] (wnat);
 \draw[->] (xz) -- (wnat);
 \draw[->] (wfix) -- (y);

 \draw[->,thick, orange] (sz) -- (xz);
 \draw[->,thick, orange] (sz) -- (z);
 \draw[->,thick, orange] (sw) -- (w);
 \draw[->,thick, orange] (sy) -- (y);

\end{tikzpicture}}
    \caption{Counterfactual graph for proof of Prop.~\ref{prop:id-est}.}
    \label{fig:po-ctf-graph}
\end{figure}
\begin{proof}[Proof Prop.~\ref{prop:id-est}]
Let $\mathcal{M}$ be an SCM compatible with the $S$-SFM in Fig.~\ref{fig:sfm-genai}. We wish to show that:
\begin{align}
    \ex\left[Y^{s_y}_{x_y, W^{s_w}_{x_w}} \mid X = x_z, S = s_z\right] = \sum_{z, w} P^{s_y}(y \mid x_y, z, w) \, P^{s_w}(w \mid x_w, z) \, P^{s_z}(z \mid x_z). \label{eq:target-id}
\end{align}
We begin by expanding $\ex\left[Y^{s_y}_{x_y, W^{s_w}_{x_w}} \mid X = x_z, S=s_z\right]$ as
\begin{align}
    &\sum_z \ex\left[Y^{s_y}_{x_y, W^{s_w}_{x_w}} \mid x_z, z, s_z\right] P(z \mid x_z, s_z) \label{eq:step-1} \\
    &= \sum_z \ex\left[Y^{s_y}_{x_y, w} \mathbbm{1}(W^{s_w}_{x_w}=w) \mid x_z, z, s_z\right] P(z \mid x_z, s_z) \label{eq:step-2} \\
    &= \sum_z \ex\left[Y^{s_y}_{x_y, w} \mid x_z, z, s_z\right] P(W^{s_w}_{x_w}=w \mid x_z, z, s_z) P(z \mid x_z, s_z) \label{eq:step-3},
\end{align}
where Eq.~\ref{eq:step-1} follows from conditioning on $Z$ and marginalizing, Eq.~\ref{eq:step-2} from counterfactual unnesting \citep{correa2025counterfactual}, and Eq.~\ref{eq:step-3} from the independence $Y^{s_y}_{x_y, w} \ci W_{x_w}^{s_w} \mid Z, S, X$ in the counterfactual graph in Fig.~\ref{fig:po-ctf-graph}. 
Note that further that from Fig.~\ref{fig:po-ctf-graph} $Y^{s_y}_{x_y, w} \ci X, S, W \mid Z$, so that
\begin{align}
    \ex\left[Y^{s_y}_{x_y, w} \mid x_z, z, s_z\right] &= \ex\left[Y^{s_y}_{x_y, w} \mid x_y, z, s_y, w\right] \\
    &= \ex\left[Y \mid x_y, z, s_y, w\right] = \ex^{s_y}\left[Y \mid x_y, z, w\right], \label{eq:y-term-id}
\end{align}
where the first step follows from the independence, second from consistency, and the final from the definition.
Further, Fig.~\ref{fig:po-ctf-graph} also implies the independence $W^{s_w}_{x_w} \ci X, S \mid Z$, so that we can write
\begin{align}
    P(W^{s_w}_{x_w}=w \mid x_z, z, s_z) &= P(W^{s_w}_{x_w}=w \mid x_w, z, s_w) \\
    &= P(W=w \mid x_w, z, s_w) = P^{s_w}(W=w \mid x_w, z).
    \label{eq:w-term-id}
\end{align}
where again the first step follows from the independence, second from consistency, and the final from the definition. Plugging in Eq.~\ref{eq:y-term-id} and Eq.~\ref{eq:w-term-id} into Eq.~\ref{eq:step-3} gives Eq.~\ref{eq:target-id}, completing the proof.
\end{proof}

\paragraph{Estimation.}
The identification result in Prop.~\ref{prop:id-est} expresses the target functional in terms of observational distribution conditionals, and therefore the plug-in estimator obtained by fitting each conditional separately is, in principle, consistent. However, plug-in estimators based on flexible nuisance estimators (e.g., gradient-boosted trees or neural networks) generally fail to achieve $\sqrt{n}$-convergence, since the bias of the nuisance estimators enters the final estimator at first order. A standard approach is to derive the influence function of the target functional and perform a one-step debiasing \citep{chernozhukov2018double}.
By performing one-step debiasing for the functional on the RHS of Eq.~\ref{eq:po-id}, we obtain the  estimator:
\begin{align} \label{eq:po-est}
    \widehat{\theta} &= \frac{1}{n} \sum_{i=1}^n f(X_i, Z_i, W_i, Y_i),
\end{align}
where
\begin{align}
    f(X, Z, W, Y) \overset{\Delta}{=}& \frac{\mathbbm{1}(X = x_y)}{\tilde{P}(x_z)} \frac{\tilde{P}(x_z \mid Z)}{\tilde{P}(x_w \mid Z)} \frac{\tilde{P}(x_w \mid Z, W)}{\tilde{P}(x_y \mid Z, W)} \left[Y - \mu(x_y, Z, W)\right] \notag \\
    &+ \frac{\mathbbm{1}(X = x_w)}{\tilde{P}(x_z)} \frac{\tilde{P}(x_z \mid Z)}{\tilde{P}(x_w \mid Z)} \left[\mu(x_y, Z, W) - \ex_{\tilde P}[\mu(x_y, Z, W) \mid x_w, Z]\right] \notag \\
    &+ \frac{\mathbbm{1}(X = x_z)}{\tilde{P}(x_z)} \ex_{\tilde{P}}[\mu(x_y, Z, W) \mid x_w, Z], \notag
\end{align}
where $\mu(x_y, Z, W) = \ex^{s_y}[Y \mid x_y, Z, W]$ is the conditional mean, $\tilde{P}$  denotes the joint distribution $\tilde P(x, z, w, y) = P^{s_y}(y \mid x, z, w) P^{s_w} (w \mid x, z) P^{s_z}(x, z)$ from which the propensities $\tilde P(x_y \mid z, w)$, $\tilde P(x_w \mid z)$, $\tilde P(x_z)$, and $\tilde P(x_z \mid z)$ are obtained. $\ex_{\tilde{P}}[\mu(x_y, Z, W) \mid x_w, Z]$ denotes the $Z$-specific nested mean of $\mu$ integrated out $W$ within the subpopulation $\{X = x_w, S = s_w\}$. The estimator in Eq.~\ref{eq:po-est} is \emph{doubly robust}: it remains consistent whenever either the outcome regressions $\mu, \ex[\mu \mid \cdot]$ or the propensities $\tilde{P}$ are correctly specified, but not necessarily both. In practice, we fit all nuisance functions via gradient boosting with cross-fitting, and plug the resulting estimates into Eq.~\ref{eq:po-est}. Confidence intervals for the estimator are obtained by assuming asymptotic normality holds, with the variance given by the empirical variance of the influence function. 
\newpage
\section{Elicitation of LLM Distributions} \label{appendix:elicit}

In this appendix, we describe in detail the procedure used to sample from the generative model's distributions, which is required for constructing the datasets $\mathcal{D}^{s_0, s_0, s_1}$ (real-world mechanism $f_Y$ replaced by the model's mechanism), $\mathcal{D}^{s_0, s_1, s_1}$ (mechanisms $f_W, f_Y$), and $\mathcal{D}^{s_1}$ (all mechanisms replaced by the model one). 
At a high level, the elicitation consists of two steps: (i) the generator $\Gamma$, which is prompted to produce a natural-language narrative describing an individual, and (ii) the annotator $\mathcal{A}$, which extracts structured covariate values from the generated narrative. The annotation step is performed using Llama 3.3 70B \citep{grattafiori2024llama}, the largest model in our suite (see App.\ref{appendix:annotator-validation} for details on ascertaining annotator validity). We describe the pipeline using the NSDUH dataset as an example, and show how $\mathcal{D}^{s_0, s_1, s_1}$ is generated. BRFSS and ACS Census datasets follow an analogous procedure.

\paragraph{NSDUH Dataset.} The National Survey on Drug Use and Health (NSDUH) is a nationally representative survey conducted annually by the Substance Abuse and Mental Health Services Administration (SAMHSA). We use the data collected in 2023, restricted to adults aged 18 and older living in the United States. The covariates considered are are: age and sex (confounders $Z$), race (protected attribute $X$), education and income (mediators $W$), and monthly marijuana use (outcome $Y$). Survey data provides sample weights $w_i$ that can be used to make the data representative at the national level. For the dataset, we sample $n=8,192$ samples with weights proportional to the survey weights $w_i$, yielding the dataset $\mathcal{D}^{s_0}$ in which all covariates are determined from the real world.
 
\paragraph{Generation.} We illustrate the generation procedure for the stage $\mathcal{D}^{s_0, s_1, s_1}$, in which the values of $X, Z$ are taken from the real-world dataset $\mathcal{D}^{s_0}$ while the values of $W, Y$ are generated by the model. For each sampled row $(X_i, Z_i)$, the generator $\Gamma$ is queried with the following prompt, where the known facts (age, sex, race) are filled in from the real-world data, and the unknown facts (education, income, monthly marijuana use) are to be determined by the model. The prompt makes explicit the population of interest (adults aged 18 and older, living in the United States, in 2023), which provides the model with context on the time frame and geographic location.

\begin{tcolorbox}[colback=gray!10,colframe=gray!50,title=\textbf{Generator Prompt ($\Gamma$)}]
\begin{verbatim}
You are a data generator. Follow the rules strictly.
CONTEXT: We are considering adults (18+ years old)
living in the United States in 2023.
RULES:
1) Write a single narrative enclosed in <story>...</story>.
2) Do NOT include headings, lists, analysis, or
   any text outside the tags.
3) Mention ALL facts given below exactly once (age, sex,
   race, education, income, marijuana use last month).
4) Keep it under 200 words.
known facts to be mentioned:
- age = 30-34 years
- sex = female
- race = White
unknown facts to be mentioned:
- edu (possible values: <= 8th grade, Some high school,
  High school graduate, Some college no degree,
  Associate degree, Bachelor's or higher)
- income (possible values: < $10,000, $10,000 - $19,999,
  $20,000 - $29,999, $30,000 - $39,999, $40,000 - $49,999,
  $50,000 - $74,999, > $75,000)
- mj_monthly (possible values: no, yes)
OUTPUT FORMAT:
<story>
(your narrative here)
</story>
\end{verbatim}
\end{tcolorbox}

\noindent Generation is performed with \texttt{temperature = 1} and \texttt{top\_p = 1}, so that samples reflect the model's distribution in deployment. A typical response from $\Gamma$ looks as follows:

\begin{tcolorbox}[colback=blue!5,colframe=blue!30,title=\textbf{Example Output}]
\textit{As a 30–34-year-old White female living in the United States in 2023, Sarah has achieved a level of success with a Bachelor’s or higher degree in her field. She works diligently, earning an income of \$50,000 - \$74,999 per year. In her free time, Sarah occasionally partakes in recreational activities, including the use of marijuana, and has indeed used it within the last month.}
\end{tcolorbox}

\paragraph{Annotation.} The narrative produced by $\Gamma$ is then passed to the annotator $\mathcal{A}$, which is queried separately for each unknown covariate. For the education variable, the prompt to $\mathcal{A}$ takes the form:

\begin{tcolorbox}[colback=gray!10,colframe=gray!50,title=\textbf{Annotator Prompt ($\mathcal{A}$)}]
\begin{verbatim}
Based on the following text:

"As a 30–34-year-old White female living in the United 
States in 2023, Sarah has achieved a level of success 
with a Bachelor’s or higher degree in her field. She 
works diligently, earning an income of $50,000 - $74,999
per year. In her free time, Sarah occasionally partakes 
in recreational activities, including the use of marijuana,
and has indeed used it within the last month."

determine the person's education. Begin your answer with the
capital letter corresponding to your chosen option below,
followed by a period.
A. <= 8th grade
B. Some high school
C. High school graduate
D. Some college, no degree
E. Associate degree
F. Bachelor's or higher
Output:
\end{verbatim}
\end{tcolorbox}

\noindent Rather than sampling from $\mathcal{A}$, we examine the next-token probabilities assigned to the option letters $\{\texttt{A}, \texttt{B}, \ldots, \texttt{F}\}$, and assign the covariate the value corresponding to the letter with the highest probability. This deterministic argmax over option tokens removes sampling noise in the annotation step, isolating the generator as the sole source of stochasticity in the pipeline. Analogous prompts are used for income and monthly marijuana use, with the option list adapted to the levels of the respective variable. The resulting tuple $(X_i, Z_i, W^{gm}_i, Y_i^{gm})$ is then added to $\mathcal{D}^{s_0, s_1, s_1}$.

\paragraph{Other stages.} The prompts for $\mathcal{D}^{s_0, s_0, s_1}$ and $\mathcal{D}^{s_1}$ follow the same template, with only the partition between \emph{known facts} and \emph{unknown facts} changing:
\begin{itemize}
    \item For $\mathcal{D}^{s_0, s_0, s_1}$ (only $f_Y$ replaced), the known facts include $X, Z$, and $W$ (all taken from the real-world data), while only $Y$ is listed under unknown facts.
    \item For $\mathcal{D}^{s_1}$ (all mechanisms replaced), no known facts are supplied beyond the population context, and all of $X, Z, W, Y$ appear as unknown facts to be generated.
\end{itemize}
The annotation step is identical across stages, with the annotator separately queried for each variable listed as unknown.

\paragraph{BRFSS and ACS Census.}

The BRFSS (Behavioral Risk Factor Surveillance System, 2023) \citep{brfss2023} and American Communicity Survey Census (2023) \citep{acs2023} datasets follow the same procedure. 
The key modifications are: 
\begin{enumerate}[label=(\roman*)]
    \item the population context in the generator prompt is updated:
        \subitem \hspace{-2em}-- BRFSS:  \emph{We are considering adults (18+ years old) living in the United States in 2023.}
        \subitem \hspace{-2em}-- ACS Census:  \emph{We are considering adults (18+ years old) who are employed in the United States in 2023.}
    \item the variable list of known and unknown facts changess, reflecting the dataset-specific SFMs shown in Fig.~\ref{fig:sfms}:
        \subitem \hspace{-2em}-- BRFSS: age and sex (confounders $Z$), race (attribute $X$), education, income, BMI, and monthly exercise (mediators $W$), and diabetes (outcome $Y$).
        \subitem \hspace{-2em}-- ACS Census: age, race, and economic region (confounders $Z$), sex (attribute $X$), education, hours worked, and employer type (mediators $W$), and salary (outcome $Y$).
\end{enumerate}

\subsection{Validating the Annotator LLM} \label{appendix:annotator-validation}
As mentioned previously, the annotation step is performed using $\mathcal{A}=$ Llama 3.3 70B \citep{grattafiori2024llama}, the largest model in our suite, which we found performed best for this task. To validate the annotator, we manually inspected a random subset of generated narratives and compared the human-assigned covariate values against those produced by $\mathcal{A}$.

We aimed for a $\pm 5\%$ confidence interval on the annotator accuracy (pooled across all variables). Using the normal approximation $1.96 \sqrt{p(1-p)/n} \leq 0.05$ at $p \geq 0.9$, this requires $n \geq 139$ ($\approx 14$ samples per model). We drew $n = 140$ narrative-variable pairs at random from the fully-generated stage $\mathcal{D}^{s_1}$ on the NSDUH dataset, stratified across the 10 models we investigated.

For each sampled narrative-variable pair, a human annotator independently assigns the unknown covariate value for $V_i \in \{X, Z, W, Y \}$ based on the text, or marks the text as \emph{inconclusive} if the narrative does not contain enough information to determine the value (e.g., the generator omits the requested fact, or describes it ambiguously). 
We found that $4.9\%$ of narrative-variable pairs were inconclusive, indicating that the generators $\Gamma$ relatively reliably mentioned the requested facts as instructed by the prompt.
On the remaining conclusive pairs, the annotator $\mathcal{A}$ agreed with the human label in $96.4\%$ of cases, which is within the targeted margin of error. We deem this level of agreement sufficient to treat $\mathcal{A}$ as a faithful extractor of structured covariate values from the generator's narratives.
\newpage
\section{Extended Experiments} \label{appendix:empirical}

In this appendix, we provide extended empirical results that expand on the findings reported in the main text. We begin by summarizing the existing disparities present in the real-world, for each of the three datasets. We then provide further detail of the geometry of bias signatures across models, via pairwise $L_1$ distances and a 2D multi-dimensional scaling plot. 
Finally, we provide more detailed analyses for the two case studies discussed in Sec.~\ref{sec:experiments}.

\paragraph{Real-World Baselines.}
\begin{table}[t]
\centering
\setstretch{1.4}
\begin{tabular}{l | c|c|c}
\toprule
\textbf{Dataset} & Direct & Indirect & Spurious \\
\midrule
NSDUH (Marijuana) & \cellcolor{green!40}$-3.8 \%\pm 1.8$ \% & \cellcolor{blue!15}$0.2 \%\pm 0.7$ \% & \cellcolor{blue!40}$1.8 \%\pm 1.0$ \% \\
BRFSS (Diabetes) & \cellcolor{blue!40}$5.7 \%\pm 2.0$ \% & \cellcolor{blue!40}$2.7 \%\pm 1.3$ \% & \cellcolor{green!40}$-5.2 \%\pm 1.3$ \% \\
ACS Census (Income) & \cellcolor{blue!40}$10.4 \%\pm 1.9$ \% & \cellcolor{blue!15}$0.3 \%\pm 1.3$ \% & \cellcolor{blue!15}$0.5 \%\pm 2.7$ \% \\
\bottomrule
\end{tabular}
\vspace{0.1in}
\caption{Real-world discrimination across datasets with 95\% confidence intervals.}
\label{tab:world}
\end{table}

\begin{figure}[t]
\centering
\begin{subfigure}{0.50\linewidth}
\centering
\includegraphics[width=\linewidth]{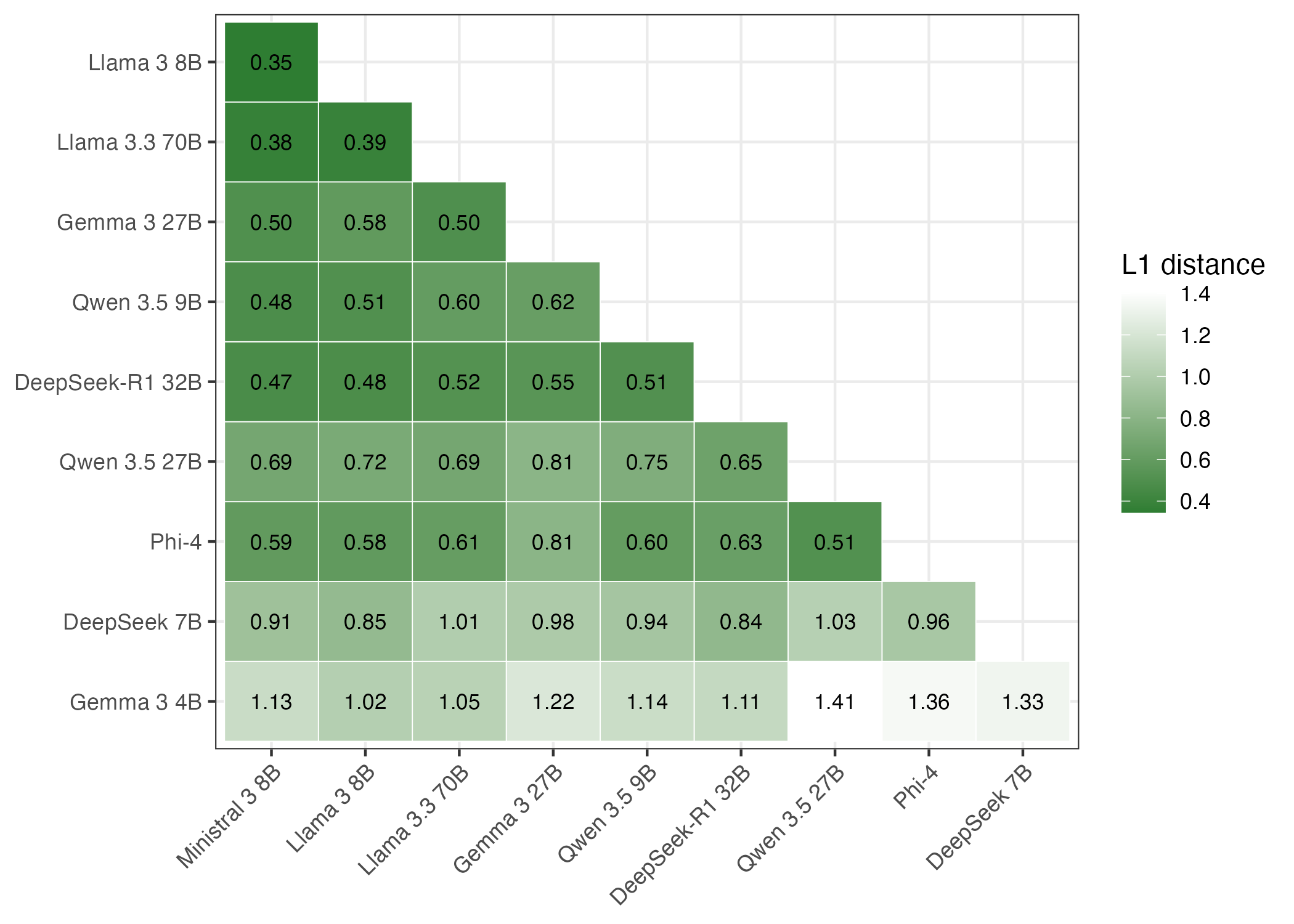}
\caption{Pairwise $L_1$ distances between bias signatures.}
\label{fig:pairwise}
\end{subfigure}
\hfill
\begin{subfigure}{0.48\linewidth}
\centering
\includegraphics[width=\linewidth]{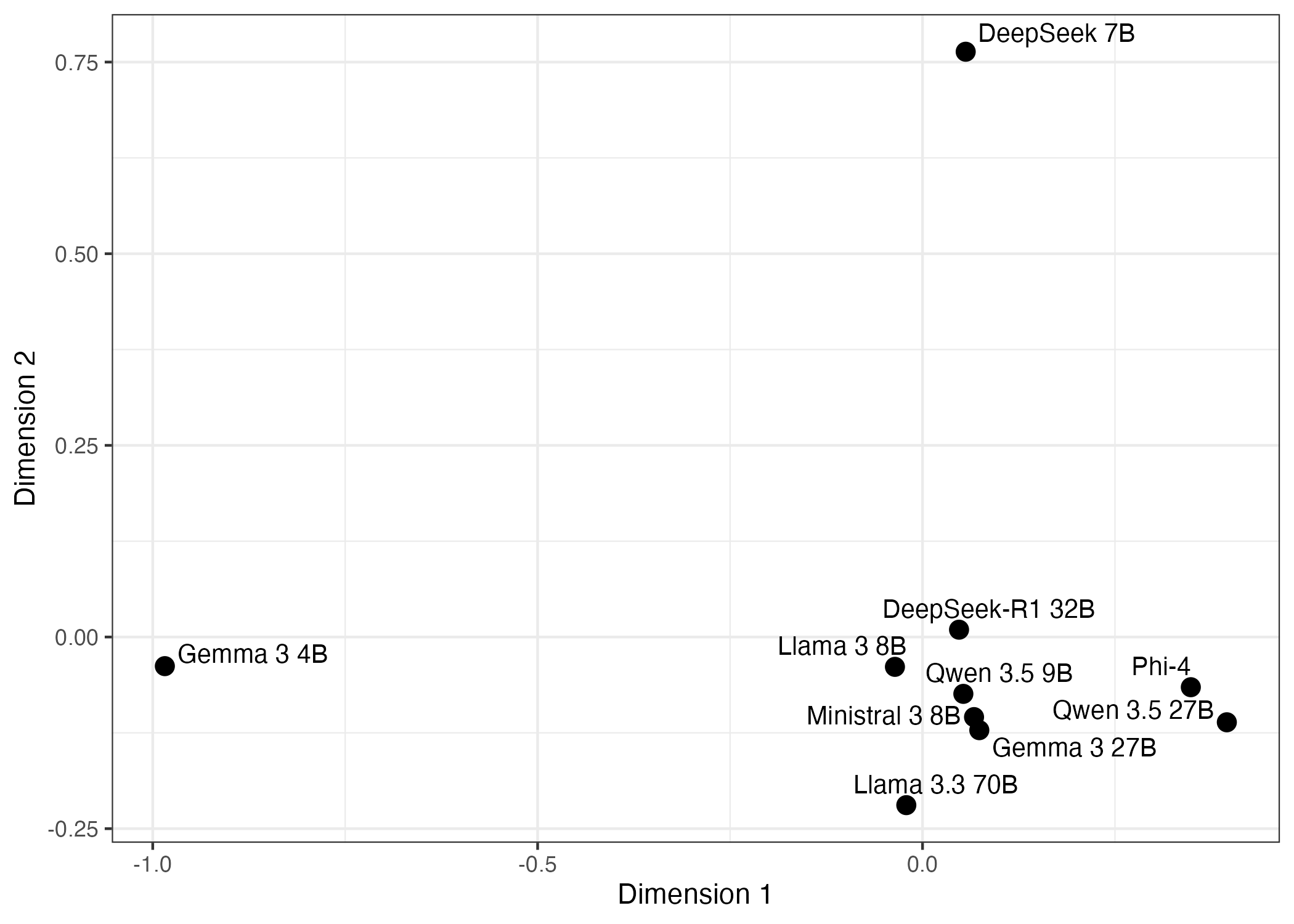}
\vspace{-4pt}
\caption{2D multidimensional scaling of bias signatures.}
\label{fig:mds}
\end{subfigure}
\caption{Similarity of model bias signatures: (a) full pairwise $L_1$ distance matrix, and (b) 2D multidimensional scaling (MDS) embedding.}
\label{fig:proximity}
\end{figure}
Tab.~\ref{tab:world} reports the direct, indirect, and spurious effects in the real-world data $\mathcal{D}^{s_0}$ for each of the three datasets, together with their 95\% confidence intervals. Specifically, the table reports the values of $x\text{-DE}_{x_0, x_1}^{s_0}(y \mid x_0)$, $-x\text{-IE}_{x_1, x_0}^{s_0}(y \mid x_0)$, and $-x\text{-SE}_{x_1, x_0}^{s_0}(y)$, with entries color-coded by direction (blue for disadvantage, green for advantage) and statistical significance (deeper shade for $|x\text{-CE}^{s_0}| \geq 1.96 \cdot \sigma$). 
Negative values of indirect and spurious effects are considered, since these effects consider reverse transitions $x_1 \to x_0$, and reporting the negative values maintains equivalent directionality of interpretation as for the direct effect.
The reported values provide the baseline against which the model-replacement influences are measured. We provide a summary of the bias statistics in the real world.
On NSDUH (outcome $Y$ is monthly marijuana use), the direct effect is significantly negative ($-3.8\% \pm 1.8\%$): controlling for sex, age, education, and income, minority individuals are \emph{less} likely to use marijuana monthly than majority individuals. 
The indirect effect mediated by education and income is close to zero and not significant ($0.2\% \pm 0.7\%$), meaning that the effect mediated by education and income is not strong. 
The spurious effect is positive and significant ($1.8\% \pm 1.0\%$), driven by the younger age profile of the minority cohort, which is itself associated with higher marijuana use. 
On BRFSS ($Y$ is diabetes diagnosis), the direct effect is significantly positive ($5.7\% \pm 2.0\%$), implying that minority individuals carry an elevated risk of diabetes that is not explained by socioeconomic and behavioral mediators. 
The indirect effect is also positive and significant ($2.7\% \pm 1.3\%$), reflecting that differences in education, income, BMI, and exercise further increase disparity, while the spurious effect is significantly negative ($-5.2\% \pm 1.3\%$), again driven by the cohort's younger age profile -- here a protective factor against diabetes. 
On ACS Census (outcome $Y$ is earning less than \$$50$k/year), the direct effect dominates ($10.4\% \pm 1.9\%$): controlling for age, education, region, hours worked, and employer type, women are more likely to be low-earners. 
Indirect ($0.3\% \pm 1.3\%$) and spurious ($0.5\% \pm 2.7\%$) effects are both small and not significant, indicating that the disparity does not flow through observed mediators or confounders. 

\paragraph{Similarity of Bias Signatures.}
Fig.~\ref{fig:proximity} complements the dendrogram in Fig.~\ref{fig:dendrogram} of the main text with two additional views of the same similarity structure: the full pairwise $L_1$ distance matrix (Fig.~\ref{fig:pairwise}) and a 2D multidimensional scaling (MDS) embedding (Fig.~\ref{fig:mds}). The heatmap shows that most models occupy a relatively tight central cluster, with pairwise distances among Llama 3 8B, Llama 3.3 70B, Ministral 3 8B, Qwen 3.5 9B, and DeepSeek-R1 32B falling at $0.6$ or below. Gemma 3 4B and DeepSeek 7B sit further from the rest of the suite, with most of their pairwise distances exceeding $0.9$. The MDS projection reinforces this picture: the bulk of the models cluster together near the origin, while Gemma 3 4B separates along the first dimension and DeepSeek 7B along the second.

\subsection{Extended Case Studies}
We now analyze the case studies from the main text using waterfall plots. Specifically, we look at Gemma 3 27B on NSDUH and Qwen 3.5 27B on BRFSS. 
Each waterfall plot visualizes the decomposition in Eq.~\ref{eq:delta-ce-decomp}: the leftmost bar is the real-world effect $\text{CE}^{s_0}$, the three middle bars correspond to the successive replacements of mechanisms $f_Y$, $f_W$, and $f_{X,Z}$ by the model mechanisms (giving the three $\Delta x\text{-CE}$ quantities in Thm.~\ref{thm:new-decomp}), and the rightmost bar is the fully-replaced effect $\text{CE}^{s_1}$.

\subsubsection{Gemma 3 27B on NSDUH}

\begin{figure}[t]
\centering
\begin{subfigure}{0.48\linewidth}
\centering
\includegraphics[width=\linewidth]{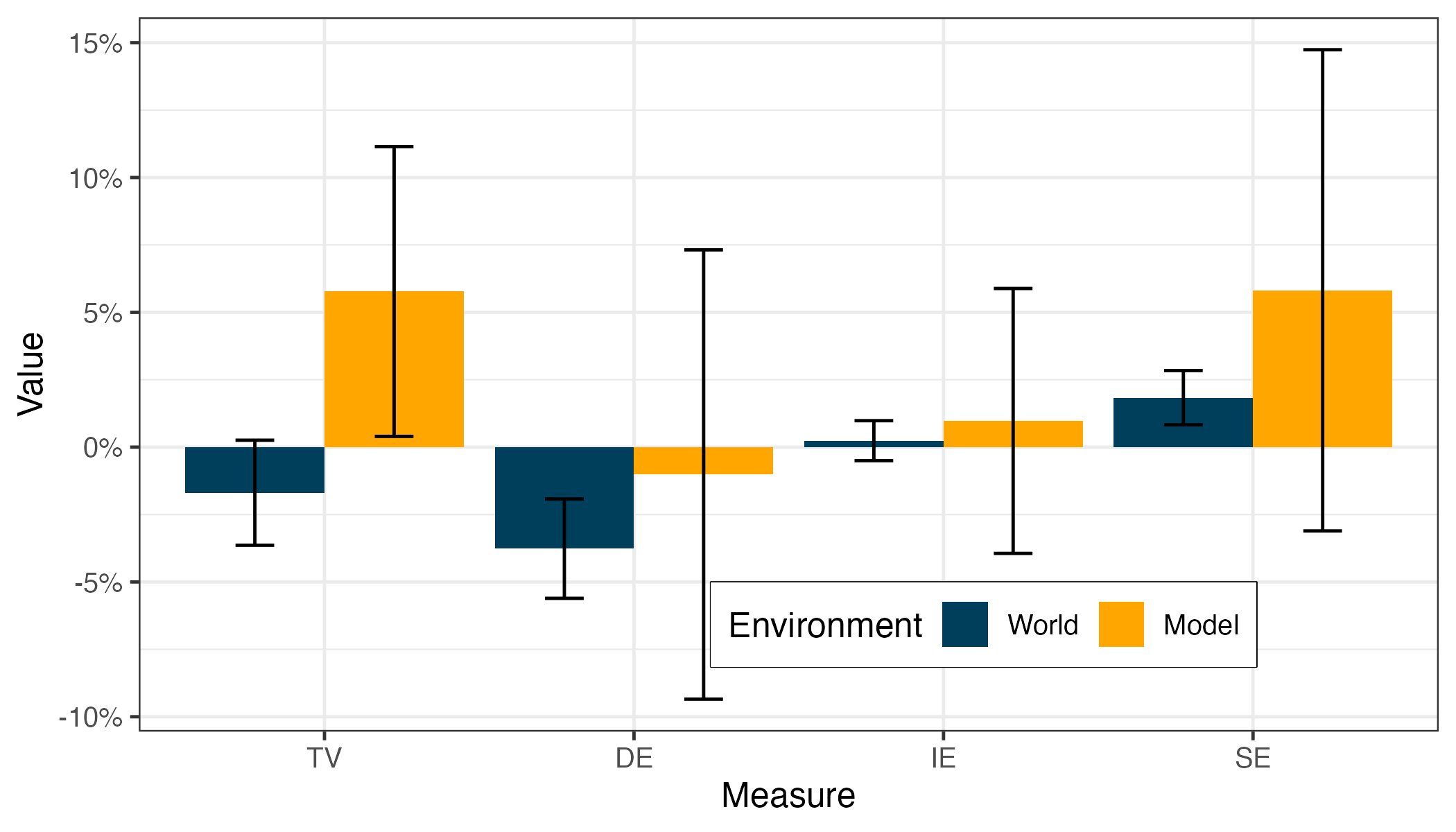}
\caption{TV decompositions.}
\label{fig:tv-decomp-gemma}
\end{subfigure}
\hfill
\begin{subfigure}{0.48\linewidth}
\centering
\includegraphics[width=\linewidth]{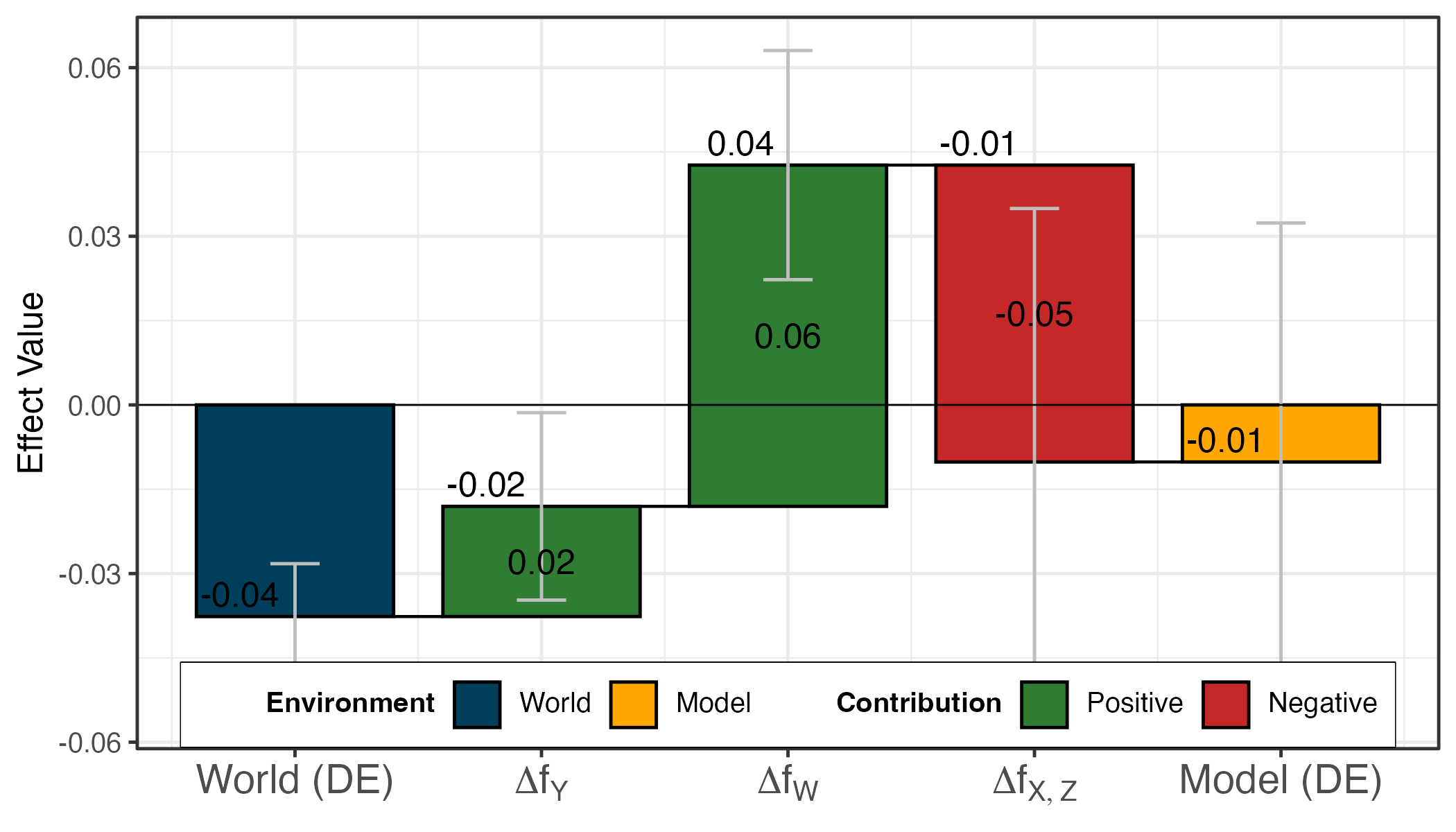}
\caption{DE waterfall.}
\label{fig:wfall-gemma-de}
\end{subfigure}
\hfill
\begin{subfigure}{0.48\linewidth}
\centering
\includegraphics[width=\linewidth]{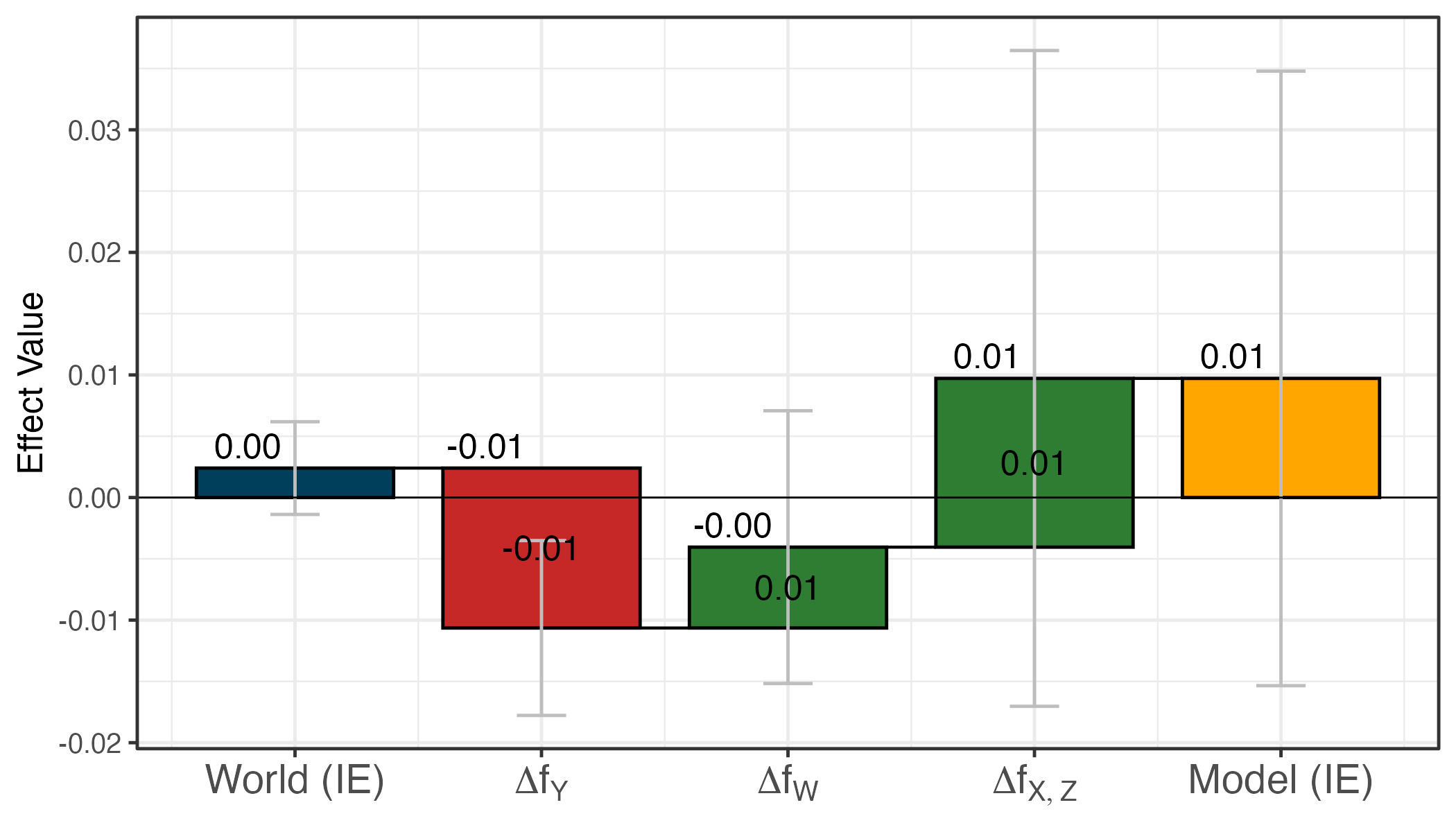}
\caption{IE waterfall.}
\label{fig:wfall-gemma-ie}
\end{subfigure}
\hfill
\begin{subfigure}{0.48\linewidth}
\centering
\includegraphics[width=\linewidth]{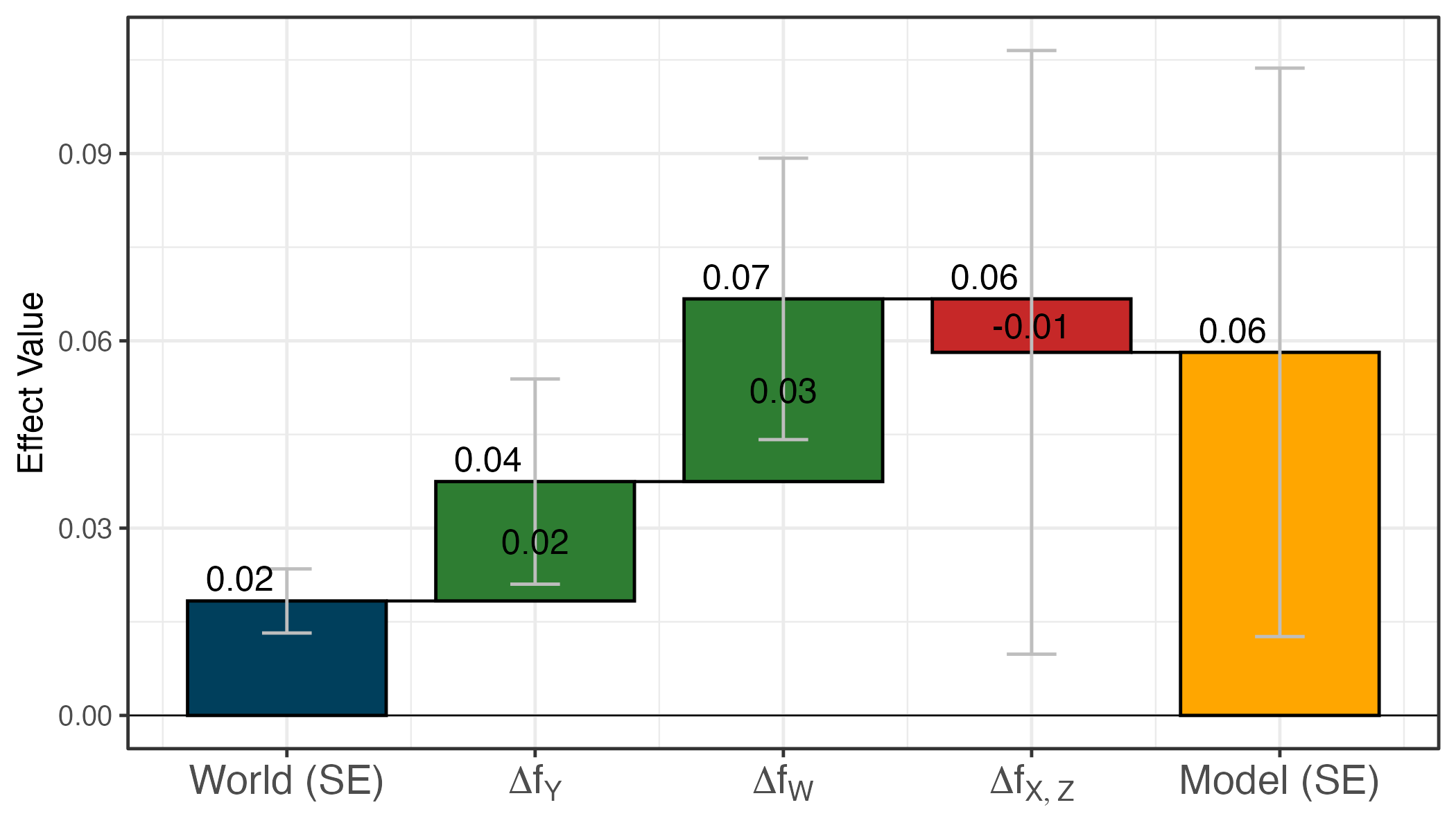}
\caption{SE waterfall.}
\label{fig:wfall-gemma-se}
\end{subfigure}
\caption{TV decomposition into $\Delta x\text{-DE}$, $\Delta x\text{-IE}$, and $\Delta x\text{-SE}$ for Gemma 3 27B on NSDUH.}
\label{fig:wfall-gemma}
\end{figure}

We now examine each pathway for Gemma 3 27B on NSDUH (Fig.~\ref{fig:wfall-gemma}), illustrating how the decomposition of Thm.~\ref{thm:new-decomp} traces, step by step, how each mechanism replacement reshapes the corresponding causal pathway between the real world and the model.

\paragraph{Direct effect.} 
The first column of Fig.~\ref{fig:wfall-gemma-de} shows the real-world direct effect, $\text{DE}^{s_0} = -3.8\% \pm 1.8\%$: controlling for sex, age, education, and income, minority individuals are \emph{less} likely to use marijuana monthly than majority individuals. The replacement $f_Y^{s_0} \to f_Y^{s_1}$ (second column, $\Delta\text{-DE}^{s_0, s_0, s_0 \to s_1} = +2.0\% \pm 3.3\%$) shifts the direct effect upward but does not yet reverse its sign ($-1.8\% \pm 2.7\%$). The subsequent replacement $f_W^{s_0} \to f_W^{s_1}$ contributes a substantial further upward shift ($+6.1\% \pm 4.0\%$), tipping the direct effect into significant positive territory ($+4.3\% \pm 2.9\%$): under Gemma's $f_Y$, the direct effect is sensitive to the distribution of $W$, and Gemma's $f_W$ produces a $W \mid X$ distribution that pushes the direct effect toward stereotyping minorities as more likely to use marijuana. Finally, the $f_{X,Z}$ replacement shifts the direct effect by $-5.3\% \pm 8.8\%$, and the fully-replaced $\text{DE}^{s_1} = -1.0\% \pm 8.3\%$ no longer reaches statistical significance.

\paragraph{Indirect effect.} 
The real-world indirect effect is close to zero and not significant, $\text{IE}^{s_0} = 0.2\% \pm 0.7\%$: differences in education and income between minority and majority individuals do not translate into a meaningful shift in marijuana use. Replacing $f_Y$ shifts the indirect effect to $-1.1\% \pm 1.2\%$ ($\Delta = -1.3\% \pm 1.4\%$), the $f_W$ replacement contributes a small positive shift ($+0.7\% \pm 2.2\%$) leaving the indirect effect at $-0.4\% \pm 1.8\%$, and the $f_{X,Z}$ replacement adds a further $+1.4\% \pm 5.2\%$, with the model's indirect effect landing at $\text{IE}^{s_1} = +1.0\% \pm 4.9\%$. None of the intermediate or final values are significant at this sample size; the indirect pathway is the least affected by Gemma's mechanism replacements.

\paragraph{Spurious effect.} 
The real-world spurious effect is $\text{SE}^{s_0} = +1.8\% \pm 1.0\%$, driven by the younger age profile of the minority cohort, which is itself associated with higher marijuana use. Both the $f_Y$ replacement ($+1.9\% \pm 3.2\%$) and the $f_W$ replacement ($+2.9\% \pm 4.4\%$) shift the spurious effect further upward, amplifying the real-world disparity to $\text{SE}^{s_0, s_1, s_1} = +6.7\% \pm 3.2\%$. The subsequent $f_{X,Z}$ replacement contributes only a small downward shift ($-0.9\% \pm 9.5\%$), and the fully-replaced $\text{SE}^{s_1} = +5.8\% \pm 8.9\%$ remains positive though no longer significant due to estimation variance. The dominant contribution here comes from $f_Y$ and $f_W$ jointly, rather than from $f_{X,Z}$ as one might initially expect for a spurious pathway: Gemma's beliefs about how demographics interact with the outcome and mediators amplify the role of demographic confounders relative to the real-world data.

Across all three pathways, the waterfall decomposition makes visible \emph{which} mechanisms are responsible for each step of the shift -- in this case, $f_Y$ and $f_W$ together for both the direct and spurious pathways -- providing a level of granularity that the aggregate $\text{DE}^{s_1}$, $\text{IE}^{s_1}$, $\text{SE}^{s_1}$ values could not.

\subsubsection{Qwen 3.5 27B on BRFSS}

\begin{figure}[t]
\centering
\begin{subfigure}{0.48\linewidth}
\centering
\includegraphics[width=\linewidth]{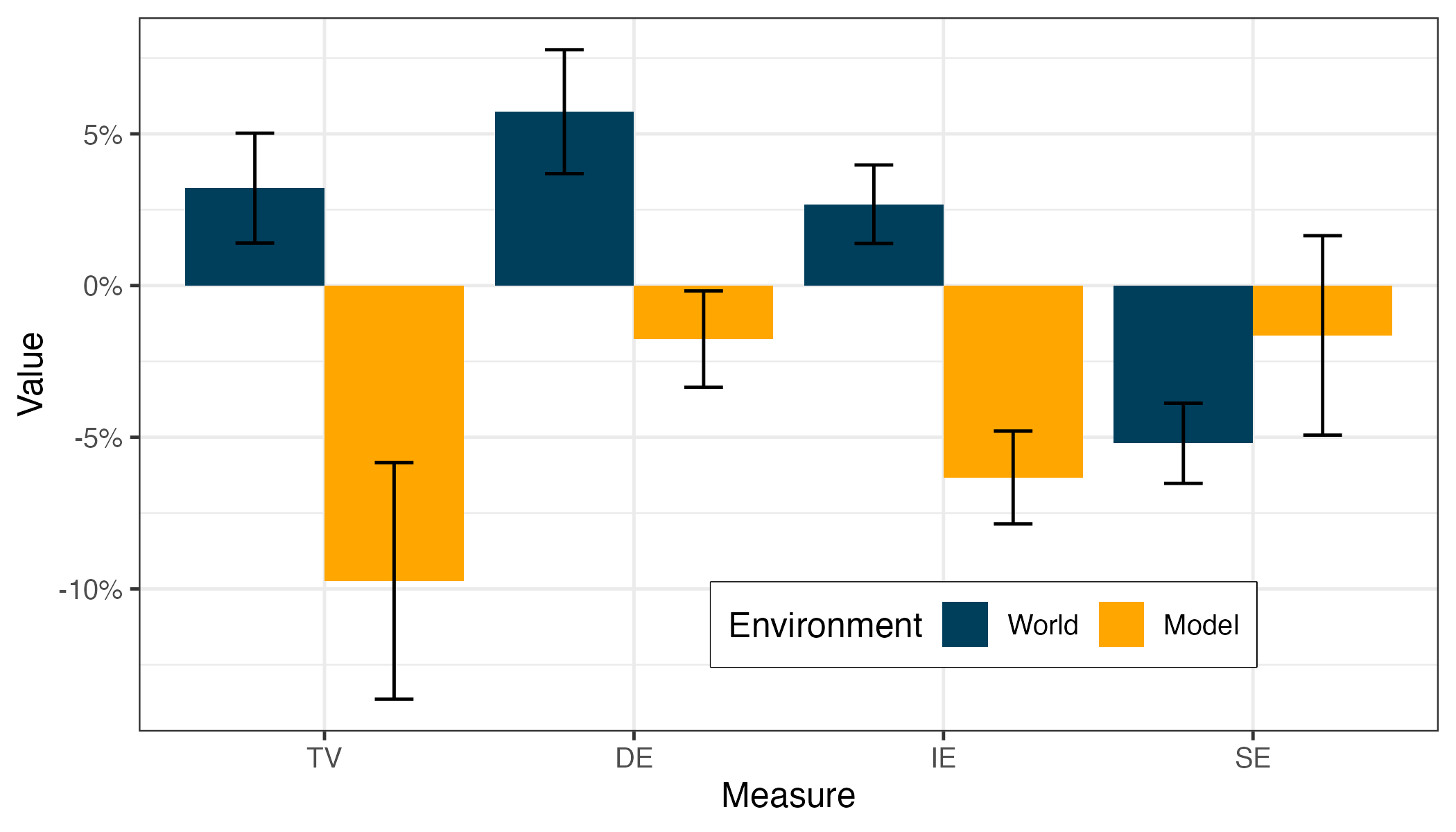}
\caption{TV decompositions.}
\label{fig:tv-decomp-qwen}
\end{subfigure}
\hfill
\begin{subfigure}{0.48\linewidth}
\centering
\includegraphics[width=\linewidth]{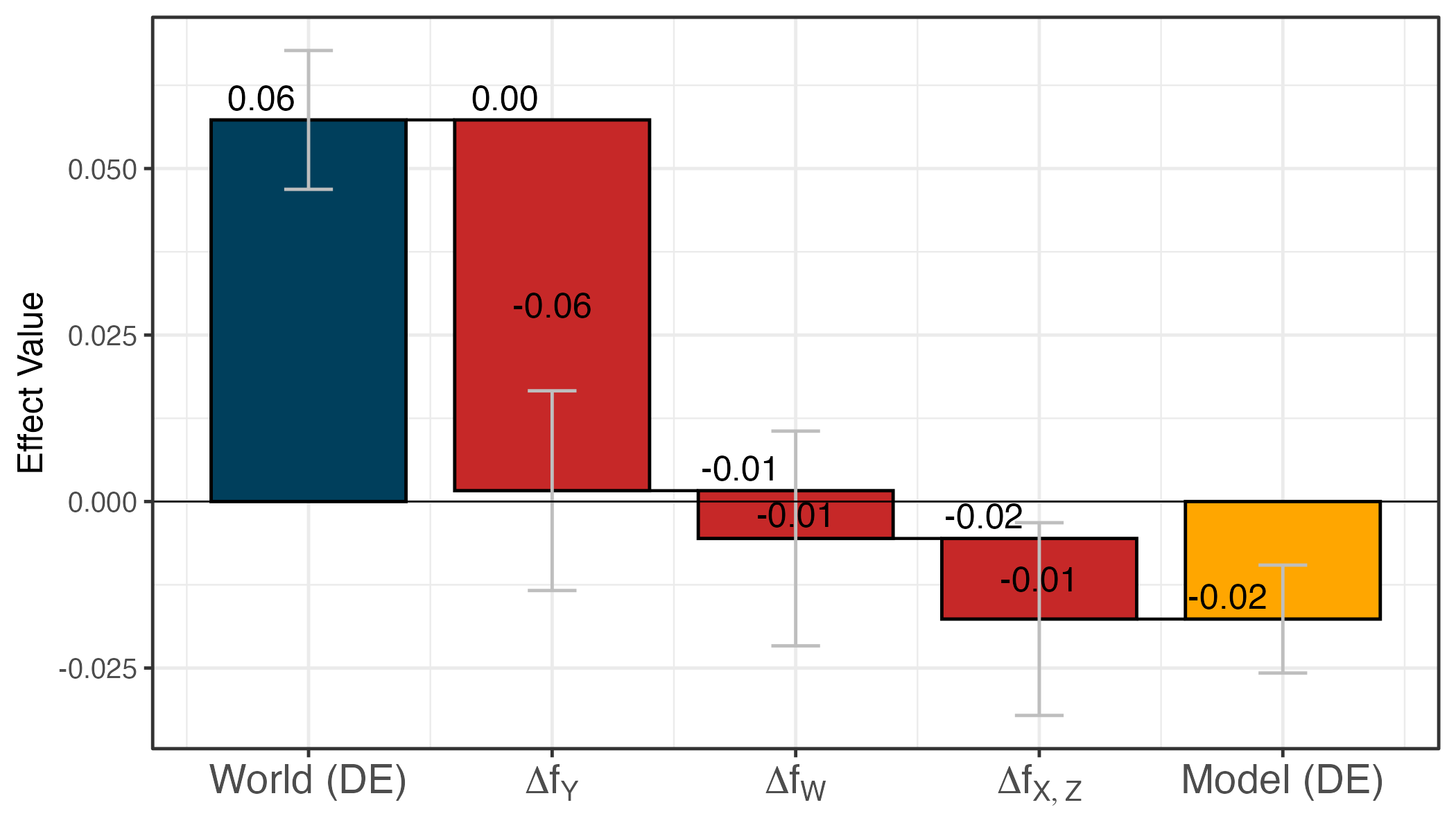}
\caption{DE waterfall.}
\label{fig:wfall-qwen-de}
\end{subfigure}
\hfill
\begin{subfigure}{0.48\linewidth}
\centering
\includegraphics[width=\linewidth]{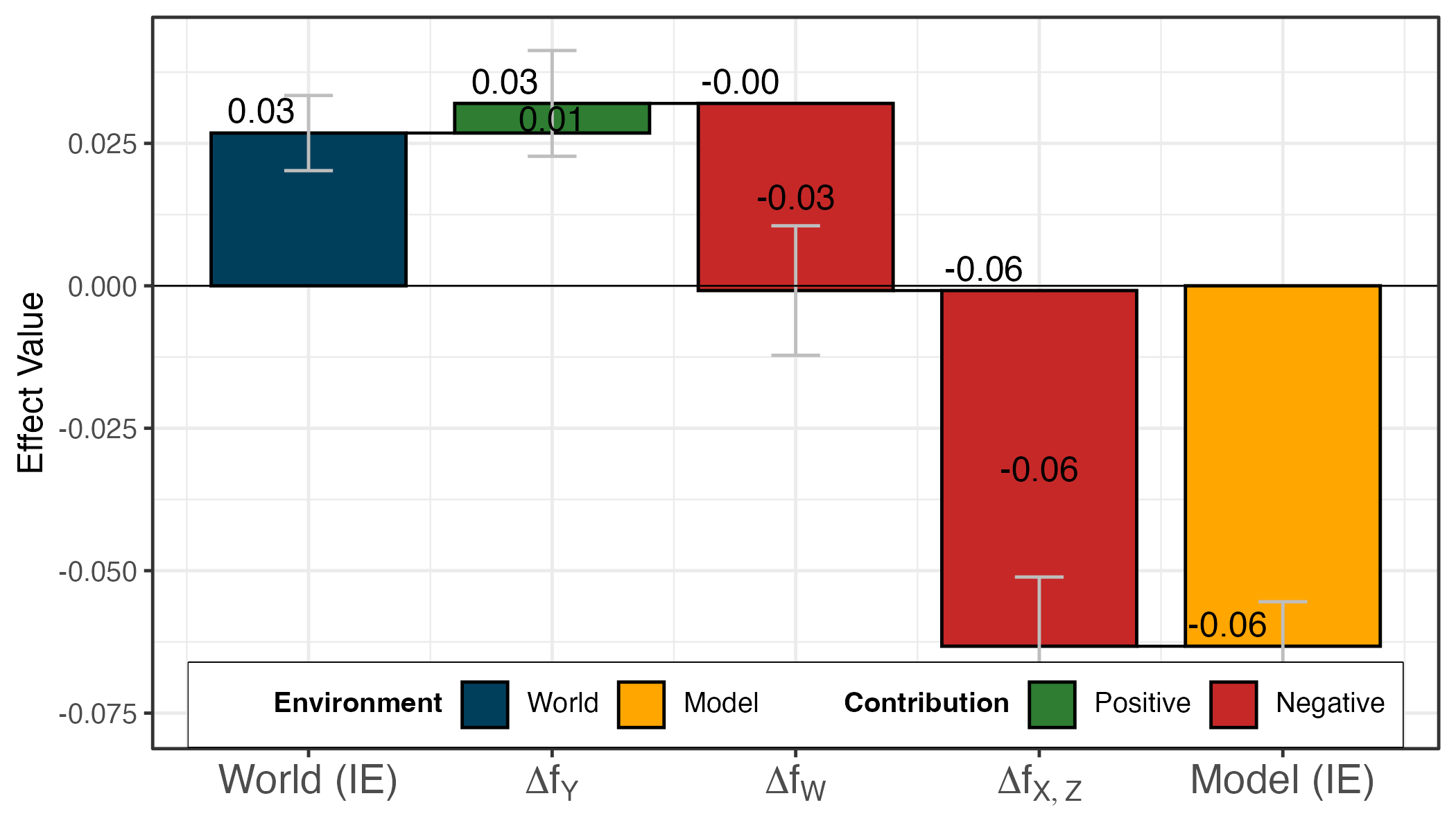}
\caption{IE waterfall.}
\label{fig:wfall-qwen-ie}
\end{subfigure}
\hfill
\begin{subfigure}{0.48\linewidth}
\centering
\includegraphics[width=\linewidth]{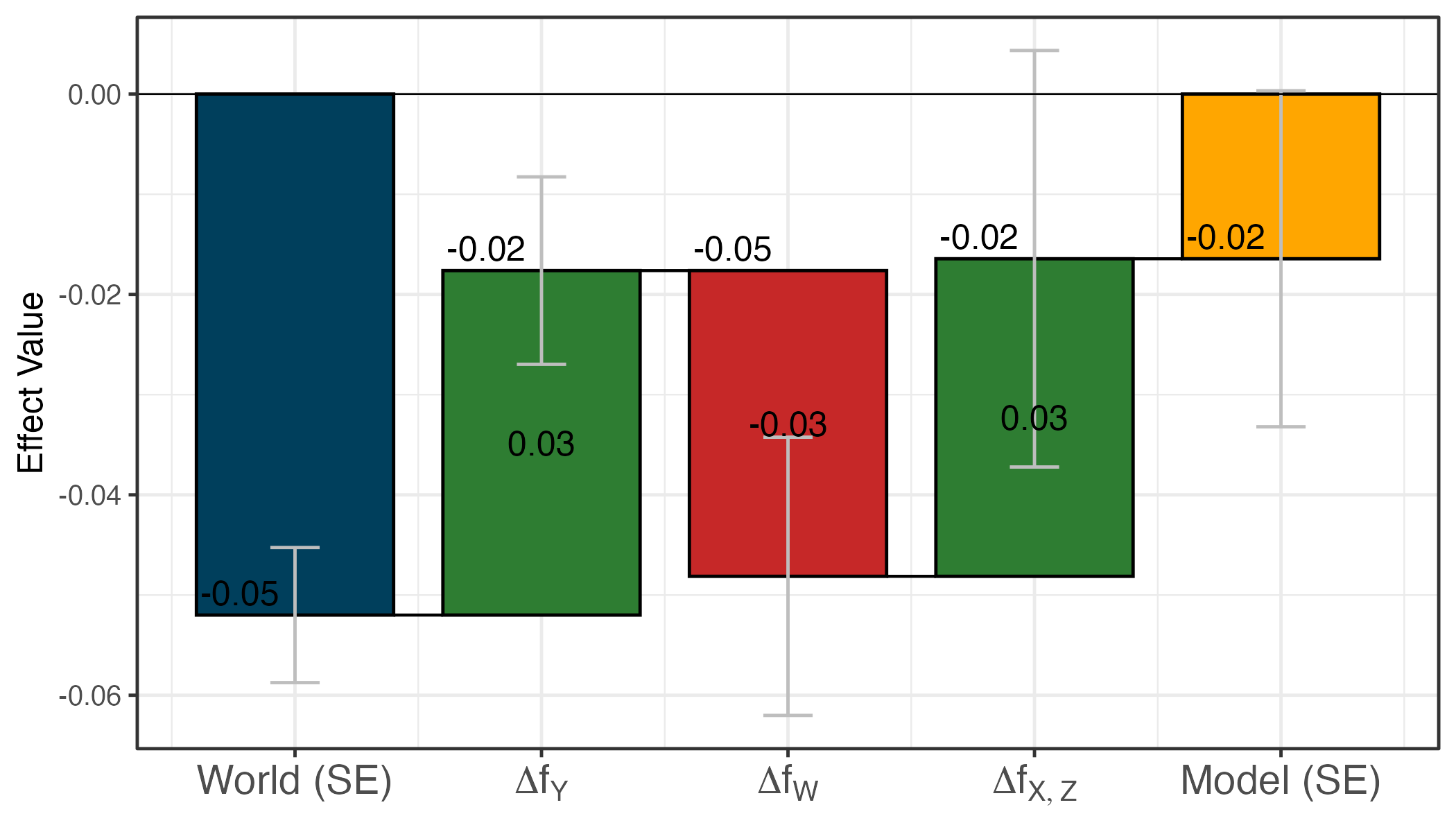}
\caption{SE waterfall.}
\label{fig:wfall-qwen-se}
\end{subfigure}
\caption{TV decomposition into $\Delta x\text{-DE}$, $\Delta x\text{-IE}$, and $\Delta x\text{-SE}$ for Qwen 3.5 27B on BRFSS.}
\label{fig:wfall-qwen}
\end{figure}

We now turn to the corresponding decomposition for Qwen 3.5 27B on BRFSS (Fig.~\ref{fig:wfall-qwen}).

\paragraph{Direct effect.} 
The first column of Fig.~\ref{fig:wfall-qwen-de} shows the real-world direct effect, $\text{DE}^{s_0} = +5.7\% \pm 2.0\%$: minority individuals carry an elevated direct risk of diabetes that is not explained by socioeconomic and behavioral mediators. The replacement $f_Y^{s_0} \to f_Y^{s_1}$ contributes $\Delta\text{-DE}^{s_0, s_0, s_0 \to s_1} = -5.6\% \pm 2.9\%$, bringing the direct effect close to zero ($+0.2\% \pm 2.1\%$): Qwen 3.5 27B's outcome mechanism does not encode a substantial direct dependence of diabetes risk on race. The subsequent $f_W$ and $f_{X,Z}$ replacements each contribute small further negative shifts ($-0.7\% \pm 3.2\%$ and $-1.2\% \pm 2.8\%$), pushing the direct effect to $\text{DE}^{s_1} = -1.8\% \pm 1.6\%$, a significant reversal in sign relative to the real world. The cumulative reversal is dominated by the $f_Y$ replacement -- it is Qwen's outcome mechanism, more than its beliefs about $W$ or $(X, Z)$, that fails to encode the direct race-diabetes association.

\paragraph{Indirect effect.} 
The real-world indirect effect is $\text{IE}^{s_0} = +2.7\% \pm 1.3\%$, reflecting that differences in education, income, BMI, and exercise between minority and majority individuals further amplify the disparity in diabetes risk. The waterfall plot uncovers an important pattern: the $f_Y$ replacement leaves the indirect effect roughly unchanged ($\Delta = +0.5\% \pm 1.8\%$, landing at $+3.2\% \pm 1.3\%$), but the $f_W$ replacement contributes a downward shift ($-3.3\% \pm 2.2\%$), bringing the indirect effect to essentially zero ($-0.1\% \pm 1.8\%$). The $f_{X,Z}$ replacement then contributes a further downward shift, $-6.2\% \pm 2.4\%$, dominating the decomposition and driving the indirect effect to $\text{IE}^{s_1} = -6.3\% \pm 1.5\%$ (Fig.~\ref{fig:wfall-qwen-ie}). The reversal here is jointly driven by $f_W$ and $f_{X,Z}$: Qwen's beliefs about how mediators depend on race, and about the joint distribution of race-correlated covariates, both push the indirect pathway in the opposite direction from reality.

\paragraph{Spurious effect.} 
The real-world spurious effect is $\text{SE}^{s_0} = -5.2\% \pm 1.3\%$, an advantage for the minority cohort driven by its younger age profile (younger age is protective against diabetes). The $f_Y$ replacement dampens this advantage ($\Delta = +3.4\% \pm 1.8\%$, landing at $-1.8\% \pm 1.3\%$), while the $f_W$ replacement partially restores it ($-3.1\% \pm 2.7\%$, bringing $\text{SE}$ back to $-4.8\% \pm 2.4\%$). Finally, the $f_{X,Z}$ replacement again dampens it ($+3.2\% \pm 4.1\%$), with the fully-replaced $\text{SE}^{s_1} = -1.6\% \pm 3.3\%$ no longer distinguishable from zero (Fig.~\ref{fig:wfall-qwen-se}). The trajectory is non-monotone but the cumulative effect is a substantial dampening of the cohort-age advantage: under Qwen's beliefs, the protective effect of younger age for minorities is no longer encoded.

\ifnum\treportflag=0
    \newpage
\section*{NeurIPS Paper Checklist}

\begin{enumerate}

\item {\bf Claims}
    \item[] Question: Do the main claims made in the abstract and introduction accurately reflect the paper's contributions and scope?
    \item[] Answer: \answerYes{}
    \item[] Justification: The four contributions listed in the introduction are each substantiated: the theoretical framework (Sec.~\ref{sec:cfgai}), decomposition results (Thm.~\ref{thm:new-decomp}), identification and estimation (Prop.~\ref{prop:id-est}), and empirical analysis of LLMs for bias (Sec.~\ref{sec:experiments}).
    \item[] Guidelines:
    \begin{itemize}
        \item The answer \answerNA{} means that the abstract and introduction do not include the claims made in the paper.
        \item The abstract and/or introduction should clearly state the claims made, including the contributions made in the paper and important assumptions and limitations. A \answerNo{} or \answerNA{} answer to this question will not be perceived well by the reviewers. 
        \item The claims made should match theoretical and experimental results, and reflect how much the results can be expected to generalize to other settings. 
        \item It is fine to include aspirational goals as motivation as long as it is clear that these goals are not attained by the paper. 
    \end{itemize}

\item {\bf Limitations}
    \item[] Question: Does the paper discuss the limitations of the work performed by the authors?
    \item[] Answer: \answerYes{}
    \item[] Justification: A dedicated ``Limitations and Future Work'' paragraph discusses the key limitations of the framework explicitly.
    \item[] Guidelines:
    \begin{itemize}
        \item The answer \answerNA{} means that the paper has no limitation while the answer \answerNo{} means that the paper has limitations, but those are not discussed in the paper. 
        \item The authors are encouraged to create a separate ``Limitations'' section in their paper.
        \item The paper should point out any strong assumptions and how robust the results are to violations of these assumptions (e.g., independence assumptions, noiseless settings, model well-specification, asymptotic approximations only holding locally). The authors should reflect on how these assumptions might be violated in practice and what the implications would be.
        \item The authors should reflect on the scope of the claims made, e.g., if the approach was only tested on a few datasets or with a few runs. In general, empirical results often depend on implicit assumptions, which should be articulated.
        \item The authors should reflect on the factors that influence the performance of the approach. For example, a facial recognition algorithm may perform poorly when image resolution is low or images are taken in low lighting. Or a speech-to-text system might not be used reliably to provide closed captions for online lectures because it fails to handle technical jargon.
        \item The authors should discuss the computational efficiency of the proposed algorithms and how they scale with dataset size.
        \item If applicable, the authors should discuss possible limitations of their approach to address problems of privacy and fairness.
        \item While the authors might fear that complete honesty about limitations might be used by reviewers as grounds for rejection, a worse outcome might be that reviewers discover limitations that aren't acknowledged in the paper. The authors should use their best judgment and recognize that individual actions in favor of transparency play an important role in developing norms that preserve the integrity of the community. Reviewers will be specifically instructed to not penalize honesty concerning limitations.
    \end{itemize}

\item {\bf Theory assumptions and proofs}
    \item[] Question: For each theoretical result, does the paper provide the full set of assumptions and a complete (and correct) proof?
    \item[] Answer: \answerYes{}
    \item[] Justification: All assumptions are stated alongside the theoretical results; proofs for Thm.~\ref{thm:new-decomp}, Cor.~\ref{cor:standard-ml} and Prop.~\ref{prop:id-est} are provided in Appendix~\ref{appendix:proofs}.
    \item[] Guidelines:
    \begin{itemize}
        \item The answer \answerNA{} means that the paper does not include theoretical results. 
        \item All the theorems, formulas, and proofs in the paper should be numbered and cross-referenced.
        \item All assumptions should be clearly stated or referenced in the statement of any theorems.
        \item The proofs can either appear in the main paper or the supplemental material, but if they appear in the supplemental material, the authors are encouraged to provide a short proof sketch to provide intuition. 
        \item Inversely, any informal proof provided in the core of the paper should be complemented by formal proofs provided in appendix or supplemental material.
        \item Theorems and Lemmas that the proof relies upon should be properly referenced. 
    \end{itemize}

    \item {\bf Experimental result reproducibility}
    \item[] Question: Does the paper fully disclose all the information needed to reproduce the main experimental results of the paper to the extent that it affects the main claims and/or conclusions of the paper (regardless of whether the code and data are provided or not)?
    \item[] Answer: \answerYes{}
    \item[] Justification: The elicitation procedure, prompts, discretization details, and model versions are described in Appendix~\ref{appendix:elicit}. Full code is provided in an anonymized repository, with computational details and hyperparameters described.
    \item[] Guidelines:
    \begin{itemize}
        \item The answer \answerNA{} means that the paper does not include experiments.
        \item If the paper includes experiments, a \answerNo{} answer to this question will not be perceived well by the reviewers: Making the paper reproducible is important, regardless of whether the code and data are provided or not.
        \item If the contribution is a dataset and\slash or model, the authors should describe the steps taken to make their results reproducible or verifiable. 
        \item Depending on the contribution, reproducibility can be accomplished in various ways. For example, if the contribution is a novel architecture, describing the architecture fully might suffice, or if the contribution is a specific model and empirical evaluation, it may be necessary to either make it possible for others to replicate the model with the same dataset, or provide access to the model. In general. releasing code and data is often one good way to accomplish this, but reproducibility can also be provided via detailed instructions for how to replicate the results, access to a hosted model (e.g., in the case of a large language model), releasing of a model checkpoint, or other means that are appropriate to the research performed.
        \item While NeurIPS does not require releasing code, the conference does require all submissions to provide some reasonable avenue for reproducibility, which may depend on the nature of the contribution. For example
        \begin{enumerate}
            \item If the contribution is primarily a new algorithm, the paper should make it clear how to reproduce that algorithm.
            \item If the contribution is primarily a new model architecture, the paper should describe the architecture clearly and fully.
            \item If the contribution is a new model (e.g., a large language model), then there should either be a way to access this model for reproducing the results or a way to reproduce the model (e.g., with an open-source dataset or instructions for how to construct the dataset).
            \item We recognize that reproducibility may be tricky in some cases, in which case authors are welcome to describe the particular way they provide for reproducibility. In the case of closed-source models, it may be that access to the model is limited in some way (e.g., to registered users), but it should be possible for other researchers to have some path to reproducing or verifying the results.
        \end{enumerate}
    \end{itemize}

\item {\bf Open access to data and code}
    \item[] Question: Does the paper provide open access to the data and code, with sufficient instructions to faithfully reproduce the main experimental results, as described in supplemental material?
    \item[] Answer: \answerYes{}
    \item[] Justification: The code is publicly available in an anonymized repository; all datasets used are publicly available and the preprocessing scripts are included.
    \item[] Guidelines:
    \begin{itemize}
        \item The answer \answerNA{} means that paper does not include experiments requiring code.
        \item Please see the NeurIPS code and data submission guidelines (\url{https://neurips.cc/public/guides/CodeSubmissionPolicy}) for more details.
        \item While we encourage the release of code and data, we understand that this might not be possible, so \answerNo{} is an acceptable answer. Papers cannot be rejected simply for not including code, unless this is central to the contribution (e.g., for a new open-source benchmark).
        \item The instructions should contain the exact command and environment needed to run to reproduce the results. See the NeurIPS code and data submission guidelines (\url{https://neurips.cc/public/guides/CodeSubmissionPolicy}) for more details.
        \item The authors should provide instructions on data access and preparation, including how to access the raw data, preprocessed data, intermediate data, and generated data, etc.
        \item The authors should provide scripts to reproduce all experimental results for the new proposed method and baselines. If only a subset of experiments are reproducible, they should state which ones are omitted from the script and why.
        \item At submission time, to preserve anonymity, the authors should release anonymized versions (if applicable).
        \item Providing as much information as possible in supplemental material (appended to the paper) is recommended, but including URLs to data and code is permitted.
    \end{itemize}

\item {\bf Experimental setting/details}
    \item[] Question: Does the paper specify all the training and test details (e.g., data splits, hyperparameters, how they were chosen, type of optimizer) necessary to understand the results?
    \item[] Answer: \answerYes{}
    \item[] Justification: All experimental details (model versions, inference setup via vLLM, elicitation prompts, variable discretization) are described in the main text and Appendix~\ref{appendix:elicit}.
    \item[] Guidelines:
    \begin{itemize}
        \item The answer \answerNA{} means that the paper does not include experiments.
        \item The experimental setting should be presented in the core of the paper to a level of detail that is necessary to appreciate the results and make sense of them.
        \item The full details can be provided either with the code, in appendix, or as supplemental material.
    \end{itemize}

\item {\bf Experiment statistical significance}
    \item[] Question: Does the paper report error bars suitably and correctly defined or other appropriate information about the statistical significance of the experiments?
    \item[] Answer: \answerYes{}
    \item[] Justification: Standard deviations are reported alongside all experimental results, and the paper discusses how uncertainty enters the estimation of the causal quantities.
    \item[] Guidelines:
    \begin{itemize}
        \item The answer \answerNA{} means that the paper does not include experiments.
        \item The authors should answer \answerYes{} if the results are accompanied by error bars, confidence intervals, or statistical significance tests, at least for the experiments that support the main claims of the paper.
        \item The factors of variability that the error bars are capturing should be clearly stated (for example, train/test split, initialization, random drawing of some parameter, or overall run with given experimental conditions).
        \item The method for calculating the error bars should be explained (closed form formula, call to a library function, bootstrap, etc.)
        \item The assumptions made should be given (e.g., Normally distributed errors).
        \item It should be clear whether the error bar is the standard deviation or the standard error of the mean.
        \item It is OK to report 1-sigma error bars, but one should state it. The authors should preferably report a 2-sigma error bar than state that they have a 96\% CI, if the hypothesis of Normality of errors is not verified.
        \item For asymmetric distributions, the authors should be careful not to show in tables or figures symmetric error bars that would yield results that are out of range (e.g., negative error rates).
        \item If error bars are reported in tables or plots, the authors should explain in the text how they were calculated and reference the corresponding figures or tables in the text.
    \end{itemize}

\item {\bf Experiments compute resources}
    \item[] Question: For each experiment, does the paper provide sufficient information on the computer resources (type of compute workers, memory, time of execution) needed to reproduce the experiments?
    \item[] Answer: \answerYes{}
    \item[] Justification: The paper provides this information at the start of the supplementary material (four NVIDIA A100 40GB GPUs, vLLM inference, total compute under 72 hours).
    \item[] Guidelines:
    \begin{itemize}
        \item The answer \answerNA{} means that the paper does not include experiments.
        \item The paper should indicate the type of compute workers CPU or GPU, internal cluster, or cloud provider, including relevant memory and storage.
        \item The paper should provide the amount of compute required for each of the individual experimental runs as well as estimate the total compute. 
        \item The paper should disclose whether the full research project required more compute than the experiments reported in the paper (e.g., preliminary or failed experiments that didn't make it into the paper). 
    \end{itemize}
    
\item {\bf Code of ethics}
    \item[] Question: Does the research conducted in the paper conform, in every respect, with the NeurIPS Code of Ethics \url{https://neurips.cc/public/EthicsGuidelines}?
    \item[] Answer: \answerYes{}
    \item[] Justification: The research complies with the NeurIPS Code of Ethics.
    \item[] Guidelines:
    \begin{itemize}
        \item The answer \answerNA{} means that the authors have not reviewed the NeurIPS Code of Ethics.
        \item If the authors answer \answerNo, they should explain the special circumstances that require a deviation from the Code of Ethics.
        \item The authors should make sure to preserve anonymity (e.g., if there is a special consideration due to laws or regulations in their jurisdiction).
    \end{itemize}

\item {\bf Broader impacts}
    \item[] Question: Does the paper discuss both potential positive societal impacts and negative societal impacts of the work performed?
    \item[] Answer: \answerYes{}
    \item[] Justification: A dedicated Broader Impacts Statement in the appendix discusses both positive impacts (enabling detection and mitigation of demographic bias in generative AI) and potential negative impacts (misuse of bias quantification tools).
    \item[] Guidelines:
    \begin{itemize}
        \item The answer \answerNA{} means that there is no societal impact of the work performed.
        \item If the authors answer \answerNA{} or \answerNo, they should explain why their work has no societal impact or why the paper does not address societal impact.
        \item Examples of negative societal impacts include potential malicious or unintended uses (e.g., disinformation, generating fake profiles, surveillance), fairness considerations (e.g., deployment of technologies that could make decisions that unfairly impact specific groups), privacy considerations, and security considerations.
        \item The conference expects that many papers will be foundational research and not tied to particular applications, let alone deployments. However, if there is a direct path to any negative applications, the authors should point it out. For example, it is legitimate to point out that an improvement in the quality of generative models could be used to generate Deepfakes for disinformation. On the other hand, it is not needed to point out that a generic algorithm for optimizing neural networks could enable people to train models that generate Deepfakes faster.
        \item The authors should consider possible harms that could arise when the technology is being used as intended and functioning correctly, harms that could arise when the technology is being used as intended but gives incorrect results, and harms following from (intentional or unintentional) misuse of the technology.
        \item If there are negative societal impacts, the authors could also discuss possible mitigation strategies (e.g., gated release of models, providing defenses in addition to attacks, mechanisms for monitoring misuse, mechanisms to monitor how a system learns from feedback over time, improving the efficiency and accessibility of ML).
    \end{itemize}
    
\item {\bf Safeguards}
    \item[] Question: Does the paper describe safeguards that have been put in place for responsible release of data or models that have a high risk for misuse (e.g., pre-trained language models, image generators, or scraped datasets)?
    \item[] Answer: \answerNA{}
    \item[] Justification: The paper introduces a bias detection methodology for existing models. It does not release new models or datasets, posing no such risks.
    \item[] Guidelines:
    \begin{itemize}
        \item The answer \answerNA{} means that the paper poses no such risks.
        \item Released models that have a high risk for misuse or dual-use should be released with necessary safeguards to allow for controlled use of the model, for example by requiring that users adhere to usage guidelines or restrictions to access the model or implementing safety filters. 
        \item Datasets that have been scraped from the Internet could pose safety risks. The authors should describe how they avoided releasing unsafe images.
        \item We recognize that providing effective safeguards is challenging, and many papers do not require this, but we encourage authors to take this into account and make a best faith effort.
    \end{itemize}

\item {\bf Licenses for existing assets}
    \item[] Question: Are the creators or original owners of assets (e.g., code, data, models), used in the paper, properly credited and are the license and terms of use explicitly mentioned and properly respected?
    \item[] Answer: \answerYes{}
    \item[] Justification: All assets used are properly cited; all are publicly available under open or permissive licenses.
    \item[] Guidelines:
    \begin{itemize}
        \item The answer \answerNA{} means that the paper does not use existing assets.
        \item The authors should cite the original paper that produced the code package or dataset.
        \item The authors should state which version of the asset is used and, if possible, include a URL.
        \item The name of the license (e.g., CC-BY 4.0) should be included for each asset.
        \item For scraped data from a particular source (e.g., website), the copyright and terms of service of that source should be provided.
        \item If assets are released, the license, copyright information, and terms of use in the package should be provided. For popular datasets, \url{paperswithcode.com/datasets} has curated licenses for some datasets. Their licensing guide can help determine the license of a dataset.
        \item For existing datasets that are re-packaged, both the original license and the license of the derived asset (if it has changed) should be provided.
        \item If this information is not available online, the authors are encouraged to reach out to the asset's creators.
    \end{itemize}

\item {\bf New assets}
    \item[] Question: Are new assets introduced in the paper well documented and is the documentation provided alongside the assets?
    \item[] Answer: \answerYes{}
    \item[] Justification: The released code implementing the methodology is documented with a README explaining the setup and reproduction steps.
    \item[] Guidelines:
    \begin{itemize}
        \item The answer \answerNA{} means that the paper does not release new assets.
        \item Researchers should communicate the details of the dataset\slash code\slash model as part of their submissions via structured templates. This includes details about training, license, limitations, etc. 
        \item The paper should discuss whether and how consent was obtained from people whose asset is used.
        \item At submission time, remember to anonymize your assets (if applicable). You can either create an anonymized URL or include an anonymized zip file.
    \end{itemize}

\item {\bf Crowdsourcing and research with human subjects}
    \item[] Question: For crowdsourcing experiments and research with human subjects, does the paper include the full text of instructions given to participants and screenshots, if applicable, as well as details about compensation (if any)?
    \item[] Answer: \answerNA{}
    \item[] Justification: No crowdsourcing or human subjects involved.
    \item[] Guidelines:
    \begin{itemize}
        \item The answer \answerNA{} means that the paper does not involve crowdsourcing nor research with human subjects.
        \item Including this information in the supplemental material is fine, but if the main contribution of the paper involves human subjects, then as much detail as possible should be included in the main paper. 
        \item According to the NeurIPS Code of Ethics, workers involved in data collection, curation, or other labor should be paid at least the minimum wage in the country of the data collector. 
    \end{itemize}

\item {\bf Institutional review board (IRB) approvals or equivalent for research with human subjects}
    \item[] Question: Does the paper describe potential risks incurred by study participants, whether such risks were disclosed to the subjects, and whether Institutional Review Board (IRB) approvals (or an equivalent approval/review based on the requirements of your country or institution) were obtained?
    \item[] Answer: \answerNA{}
    \item[] Justification: No human subjects or study participants.
    \item[] Guidelines:
    \begin{itemize}
        \item The answer \answerNA{} means that the paper does not involve crowdsourcing nor research with human subjects.
        \item Depending on the country in which research is conducted, IRB approval (or equivalent) may be required for any human subjects research. If you obtained IRB approval, you should clearly state this in the paper. 
        \item We recognize that the procedures for this may vary significantly between institutions and locations, and we expect authors to adhere to the NeurIPS Code of Ethics and the guidelines for their institution. 
        \item For initial submissions, do not include any information that would break anonymity (if applicable), such as the institution conducting the review.
    \end{itemize}

\item {\bf Declaration of LLM usage}
    \item[] Question: Does the paper describe the usage of LLMs if it is an important, original, or non-standard component of the core methods in this research? Note that if the LLM is used only for writing, editing, or formatting purposes and does \emph{not} impact the core methodology, scientific rigor, or originality of the research, declaration is not required.
    \item[] Answer: \answerNA{}
    \item[] Justification: LLMs were used for text polishing, but were not used to generate the paper's core methodology or developments. Accordingly, no declaration is provided.
    \item[] Guidelines:
    \begin{itemize}
        \item The answer \answerNA{} means that the core method development in this research does not involve LLMs as any important, original, or non-standard components.
        \item Please refer to our LLM policy in the NeurIPS handbook for what should or should not be described.
    \end{itemize}

\end{enumerate}
\fi

\end{document}